\newtheorem{theorem}{Theorem}
\newcommand{\ru}[1]{{\fontencoding{T2A}\selectfont #1}}
\newcommand{\pltt}[1]{{\fontencoding{T1}\selectfont\ttfamily #1}}
\newcommand{\vi}[1]{{\fontencoding{T5}\selectfont #1}}
\newcommand{\zhtt}[1]{\begin{CJK*}{UTF8}{gbsn}\ttfamily #1\end{CJK*}}
\newcommand{\jatt}[1]{\begin{CJK*}{UTF8}{min}\ttfamily #1\end{CJK*}} 
\newcommand{\kott}[1]{\begin{CJK*}{UTF8}{mj}\ttfamily #1\end{CJK*}}
\newcommand{\deltaup}[1]{{\scriptsize (\textcolor{red}{$\blacktriangle$}\,#1)}}
\newcommand{\deltadown}[1]{{\scriptsize (\textcolor{blue}{$\blacktriangledown$}\,#1)}}
\newcommand{\uptriangle}{\textcolor{red}{$\blacktriangle$}}
\newcommand{\down}{\textcolor{blue}{$\blacktriangledown$}}
\begin{document}

%

%

\twocolumn[

\aistatstitle{Topological Alignment of Shared Vision-Language Embedding Space}

\aistatsauthor{ Junwon You \And Dasol Kang \And  Jae-Hun Jung }

\aistatsaddress{ Department of Mathematics,  \\ POSTECH \And  BootCamp, Google \And Department of Mathematics, \\ POSTECH } ]

\begin{abstract}
Contrastive Vision-Language Models (VLMs) have demonstrated strong zero-shot capabilities. 
However, their cross-modal alignment remains biased toward English due to limited multilingual multimodal data. 
Recent multilingual extensions have alleviated this gap but enforce instance-level alignment while neglecting the global geometry of the shared embedding space. 
We address this problem by introducing \textbf{ToMCLIP} (\textbf{To}pological Alignment for \textbf{M}ultilingual \textbf{CLIP}), a topology-aware framework aligning embedding spaces with topology-preserving constraints. 
The proposed method applies persistent homology to define a topological alignment loss and approximates persistence diagram with theoretical error bounds using graph sparsification strategy. 
This work validates the proposed approach, showing enhanced structural coherence of multilingual representations, higher zero-shot accuracy on the CIFAR-100, and stronger multilingual retrieval performance on the xFlickr\&CO. 
Beyond VLMs, the proposed approach provides a general method for incorporating topological alignment into representation learning. 
Code is available at \url{https://github.com/junwon0/ToMCLIP.git}.
\end{abstract}

\section{INTRODUCTION}

Contrastive Vision-Language Models (VLMs), such as CLIP~\citep{radford2021learning} and ALIGN~\citep{jia2021scaling} have demonstrated strong zero-shot transfer capabilities by learning a shared embedding space for images and texts~\citep{bordes2024introduction}. 
These models align paired samples through contrastive learning, enabling diverse downstream tasks without task-specific supervision. 
Although autoregressive multimodal large language models such as LLaVA~\citep{liu2024visual}, Qwen-VL~\citep{bai2023qwen}, and Gemini~\citep{team2023gemini} have recently achieved vision-language understanding via generative training, contrastive VLMs remain effective for retrieval tasks and computational efficiency.

Despite recent multilingual extensions~\citep{carlsson2022cross, chen2023mclip, yang2024embracing}, representation spaces remain structurally misaligned. 
Most approaches enforce instance-level alignment via distillation or continual learning, but they fail to preserve the global geometry in the shared embedding space. 
This structural misalignment causes unstable cross-lingual retrieval and inconsistent semantic clustering.

\begin{figure*}[!ht]
  \centering
  \includegraphics[width=0.95\linewidth]{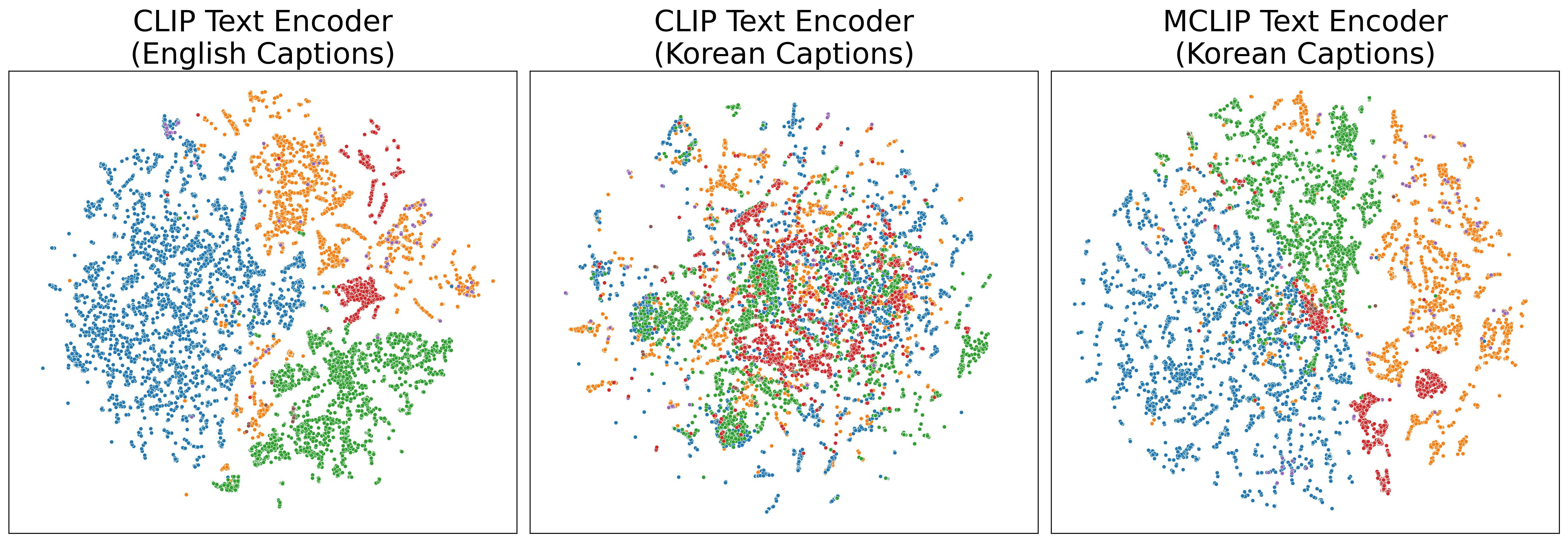} 
  \caption{Visualization of text embeddings (English and Korean) in the latent space using t-SNE~\citep{maaten2008visualizing}, from CLIP and multilingual CLIP (MCLIP; \citealp{carlsson2022cross}) text encoders.  
  The Fashion Product Images dataset~\citep{param_aggarwal_2019} was used, where the \texttt{productDisplayName} field serves as the input caption to the text encoders. Colors indicate the corresponding \texttt{masterCategory} of each product.}
  \label{fig:motivation_embedding}
\end{figure*}

As illustrated in Figure~\ref{fig:motivation_embedding}, 
the English and Korean text embeddings produced by the CLIP encoder are not aligned.
Even the multilingual CLIP (MCLIP; \citealp{carlsson2022cross})
fails to achieve cross-lingual alignment, with multiple semantic categories remaining intermixed in the center of the embedding space. 
To address this limitation, we propose \textbf{ToMCLIP}: \textbf{To}pological Alignment for \textbf{M}ultilingual \textbf{CLIP}, a topology-aware training framework that enforces structural consistency across languages using topological data analysis.
This approach is motivated by the hypothesis that performance gaps between English and other languages stem from differences in the topological structure of their latent representations.

The contributions of this work are as follows:
\begin{itemize}
    \item We introduce a topology-aware training framework for multilingual contrastive VLMs. 
    It formalizes the structural misalignment across languages and addresses it with a topological alignment loss that enforces structural alignment in the shared embedding space.
    \item We develop a scalable approximation for persistence diagrams. 
    The approach constructs sparse graphs using MST-based sparsification and provides theoretical error bounds of approximation.
    \item We validate the proposed method using case studies on multilingual vision-language tasks. 
    The experiments reveal improved cross-lingual structural coherence, higher zero-shot accuracy on the CIFAR-100, and stronger multilingual retrieval performance on the xFlickr\&CO.
\end{itemize}

\subsection{Related Work}

Appendix~\ref{app:relatedworks} reviews related work on contrastive VLMs and autoregressive multimodal large language models.


\paragraph{Multilingual Extensions of Contrastive VLMs.}
Various multilingual extensions of contrastive VLMs have been developed, using knowledge distillation, continual learning, or multilingual pretraining to align images and texts across languages.
For example, MCLIP~\citep{carlsson2022cross} trains a single multilingual text encoder using text-only, machine-translation-based distillation to match the original CLIP English text space.
In contrast, mCLIP~\citep{chen2023mclip} retains the dual-encoder design of CLIP but aligns a multilingual text encoder to CLIP via Triangle Cross-modal Knowledge Distillation (TriKD). 
The multilingual text encoder is initialized using contrastive pretraining.
AltCLIP~\citep{chen2023altclip} replaces the original CLIP text encoder with a pretrained multilingual text encoder.
It aligns multilingual text representations to the CLIP image–text space through knowledge distillation and contrastive learning.
Continual language learning approaches~\citep{yang2024embracing} add languages incrementally to mitigate catastrophic forgetting.

\paragraph{Topological Analysis of the Embedding Space.}
Recent studies have emphasized the importance of preserving the topological structure in representation learning~\citep{moor2020topological, trofimov2023learning, zilberstein2024topology}. 
Complementary efforts have employed topological representations enriching representation learning~\citep{carriere2020perslay, papillon2023architectures, wen2024tensor}. 
Building on these advances, topology-aware techniques have been applied in the context of VLMs to improve embedding robustness and generalization~\citep{zhang2024homology, rahim2024topological, huang2025topology}. 
Furthermore, topological representations have proven effective for knowledge distillation and continual learning, where the latent space geometry acts as transferable knowledge~\citep{kim2024topological, wang2024persistence, hai2025topology}.

Despite these advances, topological consistency across multilingual embeddings remains underexplored. 
This work proposes a topological alignment framework that enforces structural coherence between the latent spaces of CLIP and MCLIP using persistent homology. 
We focus on MCLIP as a representative approach, as it relies on limited data and a simple MSE-based distillation objective, in contrast to other methods that require large-scale datasets or computationally expensive contrastive learning.
\begin{figure*}[!ht]
  \centering
  \includegraphics[width=0.9\linewidth]{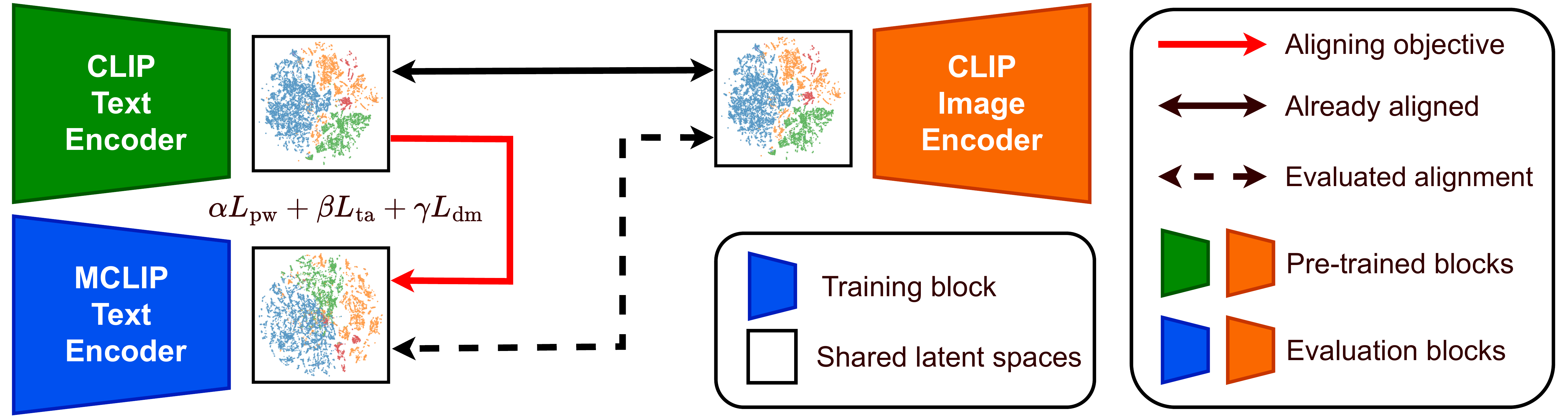} 
  \caption{Overview of the proposed alignment framework between CLIP ($E_T$) and multilingual CLIP (MCLIP; $E_S$) text encoders. 
  $E_S$ is trained to align with the frozen $E_T$ using a combination of loss functions; $L_{\text{pw}}$ enforces point-wise alignment; $L_{\text{ta}}$ and $L_{\text{dm}}$ promote geometric alignment by preserving topological structures. 
  The evaluation is conducted by pairing $E_S$ with the pretrained CLIP image encoder, enabling cross-lingual retrieval in the shared embedding space.}
  \label{fig:tomclip}
\end{figure*}

\section{TOPOLOGICAL ALIGNMENT}
\label{sec:topological_alignment}

Figure~\ref{fig:tomclip} presents an overview of our proposed alignment framework.
Appendix~\ref{app:ph} presents the preliminaries of the persistent homology, including persistence diagrams and the (sliced) Wasserstein distance.

We integrated topological alignment loss with MCLIP~\citep{carlsson2022cross}.
The MCLIP proposes a teacher-student framework that applies machine-translated captions for training. 
A set of English captions $X$ is translated into a target language to form $X^*$. 
The CLIP text encoder $E_T$ (teacher) encodes the original captions $X$, and the MCLIP text encoder $E_S$ (student) encodes the translated captions $X^*$. 
Then $E_S$ is trained to align with the teacher by minimizing the mean squared error (MSE) between the output embeddings:
\begin{equation}
    L_{\mathrm{pw}} = \mathrm{MSE}(E_T(X), E_S(X^*)).
\end{equation}

This approach focuses on point-wise alignment, overlooking the structural consistency of the embedding space across languages. 
Table~\ref{tab:notation_2_1} in Appendix~\ref{app:notations} summarizes the notations used in Section~\ref{sec:topological_alignment}.

\subsection{Topological Alignment Loss}
We introduce a novel topological alignment loss $L_\mathrm{ta}$ that enforces the global structural alignment. 
For a batch of data comprising $N$ image-text pairs $\{(I_i,T_i)\}^N_{i=1}$, the text representations $\{E_T(T_i)\}^N_{i=1}$ form a geometric structure in the embedding space (Figure~\ref{fig:motivation_embedding}).
The MCLIP loss $L_{\mathrm{pw}}$ considers each representation $E_T(T_i)$ independently, ignoring the geometric relationships between the samples.

To address this problem, we compute the persistence diagram $D_T$ from the point cloud $\{E_T(T_i)\}_{i=1}^N$, which summarizes the topological features of the embedding distribution (e.g., connected components and cycles).
Persistence diagrams provide a summary of the global structure of a point cloud.
Given an embedding set, we gradually increase a neighborhood radius and track how connected components merge and how loops appear or disappear.
Each topological feature is recorded as a point $(b, d)$ in the diagram, where $b$ is the radius at which the structure appears (birth) and $d$ is the radius at which it disappears (death).
Features that persist over a long radius range (large $d - b$) correspond to prominent global structures, while short-lived features represent noise.
Similarly, we compute $D_S$ from the point cloud $\{E_S(T_i^*)\}_{i=1}^N$, where $T^*_i$ denotes the translated caption of $T_i$, capturing the structure of the MCLIP.
To align these spaces, we define the topology alignment loss:
\begin{equation}
    L_\mathrm{ta} = \mathrm{SW}_p^{(K)}(D_T, D_S),
\end{equation}
where $\mathrm{SW}_{p}$ denotes the sliced $p$-Wasserstein distance (SWD,~\citealp{bonneel2015sliced}) and $K$ represents the number of projection directions.
The SWD provides a fast, differentiable, and GPU-friendly approximation of the Wasserstein distance, making it suitable as a training loss. 
Minimizing the discrepancy between $D_T$ and $D_S$ enforces both embedding spaces to preserve comparable global topological structures, complementing pointwise matching with structural alignment.

Furthermore, we define a distance matrix loss $L_\mathrm{dm}$ to promote local geometric alignment between the latent spaces.
Given a point cloud $X = \{x_i\}^N_{i=1}$, the pairwise distance matrix is computed as follows: \( (\mathrm{M_X})_{i,j} = \left\Vert x_i-x_j \right\Vert \) for $1 \leq i,j \leq N$ where $\left\Vert \cdot \right\Vert$ denotes the Euclidean ($L_2$) norm. 
The distance matrix loss is defined as follows:
\begin{equation}
    L_\mathrm{dm} = \mathrm{MSE}(M_T,M_S)
\end{equation}
where $M_T$ and $M_S$ denote the distance matrices computed from the point clouds $T = \{E_T(T_i)\}^N_{i=1}$ and $S = \{E_S(T_i^*)\}^N_{i=1}$, respectively.
The total training objective is defined as the weighted sum of three loss components:
\begin{equation}
\label{eq:total_loss}
    L_{\mathrm{total}} = \alpha L_{\mathrm{pw}} + \beta L_{\mathrm{ta}} + \gamma L_{\mathrm{dm}}  ,
\end{equation}
where \( \alpha \), \( \beta \), and \( \gamma \) are hyperparameters that control the relative contributions of each loss term.
Geometry describes local numerical relationships between embeddings, such as pairwise distances, whereas topology captures global structural patterns, including connected components, clusters, and higher-order features.
Accordingly, the pairwise loss $L_{\mathrm{pw}}$ preserves instance-level alignment, the topological alignment loss $L_{\mathrm{ta}}$ enforces global structural consistency across latent spaces, and the distance matrix loss $L_{\mathrm{dm}}$ promotes local geometric alignment by matching pairwise distances.

\paragraph{Stability-Based Justification of the Loss Design.}
Let $X,Y\subset\mathbb{R}^n$ be finite point clouds.
The $k$-dimensional persistence diagrams are denoted by $D^{(k)}_X$ and $D^{(k)}_Y$, respectively.
By the stability theorem, for any $p\ge1$, $C_k\ge1$ exists such that
\begin{equation}
\label{eq:stability}
    W_p\big(D^{(k)}_X, D^{(k)}_Y\big)\ \le\ C_k\, W_p^c(X,Y),
\end{equation}
where $W_p$ is the $p$-Wasserstein distance between diagrams and $W_p^c$ is the $p$-Wasserstein distance between point clouds~\citep{skraba2020wasserstein}.
Thus, if $W_p\big(D^{(k)}_X, D^{(k)}_Y\big)\ge\tau$, then $W_p^c(X,Y)\ge\tau/C_k$.
Therefore, minimizing the distance between persistence diagrams ($\mathcal{L}_{\mathrm{ta}}$) reduces the certified lower bound on the point cloud discrepancy. 
Moreover, because $D^{(0)}$ summarizes the connectivity in the embedding space, minimizing $\mathcal{L}_{\mathrm{ta}}$ between $D^{(0)}$s reduces the cross-lingual semantic cluster misalignment, encouraging semantically equivalent texts to belong to the same cluster.

However, $\mathcal{L}_{\mathrm{ta}}$ and $\mathcal{L}_{\mathrm{dm}}$ are invariant to Euclidean isometries. 
If $Y=RX+t$ with $R\in O(n)$ (i.e., $R^\top R=I$ and $\det R\in\{\pm1\}$) and $t\in\mathbb{R}^n$, then $\mathcal{L}_{\mathrm{ta}}=\mathcal{L}_{\mathrm{dm}}=0$ and $W_p^c(X,Y)$ can be arbitrarily large. 
Hence, these terms alone do not reduce $W_p^c$ or prevent rigid-motion drift. 
Therefore, $\mathcal{L}_{\mathrm{pw}}$ is needed to fix the coordinate frame, while $\mathcal{L}_{\mathrm{ta}}$ aligns the global topology and $\mathcal{L}_{\mathrm{dm}}$ matches the pairwise geometry.

\subsection{Approximating Persistence Diagrams}
\label{sec:approximate_PD}

This work employs two strategies to approximate the persistence diagram of the Vietoris-Rips (Rips) complex with reduced computational overhead:
\begin{itemize}
    \item We restrict the computation to 0-dimensional ($H_0$) features and the birth times of 1-dimensional ($H_1$) features, which can be extracted from the minimal spanning tree (MST)~\citep{kruskal1956shortest} with a union-find~\citep{tarjan1979class}. 
    This eliminates the need to construct the full Rips complex. 
    Prior work has confirmed that $H_0$ features are sufficient to capture the topological structure of latent representations~\citep{moor2020topological,kim2024topological}.
    \item To reduce the computational cost of MST further, we build a sparse graph from pairwise distances between embeddings, limiting the number of candidate edges.
\end{itemize}

This approximation reduces memory and time, enabling persistence diagrams in large-scale training. 
For a point cloud with $N$ points, computing the Rips complex has an exponential complexity of up to $\mathcal{O}(N^{k+1})$ for $k$-dimensional simplices.
Persistent homology via boundary-matrix reduction has a worst case time of $\mathcal{O}(m^3)$ and a memory of $\mathcal{O}(m^2)$~\citep{otter2017roadmap}, where $m$ denotes the total number of simplices in the filtration. 
Consequently, computing $H_0$ has a cost of $m=\mathcal{O}(N^2)$ up to $\mathcal{O}(N^6)$, whereas computing $H_1$ costs $m=\mathcal{O}(N^3)$ up to $\mathcal{O}(N^9)$. 
However, $H_0$ and the birth time of $H_1$ features can be computed via the MST, which has a computational complexity of $\mathcal{O}(E\log V)$, where $V$ denotes the number of vertices and $E$ represents the number of edges~\citep{cormen2022introduction}.
Notably, for $H_0$, only $N-1$ edges are necessary to determine the death time, corresponding to the edges of the MST, out of a total $\binom{N}{2}$ edges in the fully connected graph.
Therefore, constructing the MST over a complete graph is computationally inefficient.
To mitigate this problem, we construct a sparse graph $G_\epsilon=(V,E_\epsilon)$ from a point cloud $X = \{x_1, \cdots, x_N\} \subset (\mathbb{R}^n, d)$, where $V = \{x_i\}_{i=1}^N$ and $E_\epsilon = \{(x_i,x_j) \mid d(x_i,x_j) \leq \epsilon \}$, with $d$ denoting a metric (e.g., Euclidean distance).
This sparsification reduces the number of edges while retaining a sufficient topological structure to approximate the persistence diagram.

We calculate the upper bound on the approximation error of the proposed method.
We construct a weighted complete graph $G = (V, E, \omega)$ from a point cloud $X$, where $V = X$, $E = \{(x_i, x_j) \mid x_i, x_j \in X,\, i \neq j \}$, and the weight function $\omega : E \to \mathbb{R}_{\ge0}$ is defined as  
\begin{equation}
    \omega((x_i, x_j)) = \frac{d(x_i, x_j)}{M},
\end{equation}
where $M = \max\limits_{(x_i,x_j) \in E} d(x_i, x_j)$.  
By construction, $0 \leq \omega(e) \leq 1$ for all $e \in E$.  

\begin{theorem}
    Let $0 \le \epsilon \le 1$ and $G_\epsilon = (V,E,\omega_\epsilon)$,
    \begin{equation}
        \omega_\epsilon(e) =
        \begin{cases}
            \omega(e), & \text{if } \omega(e) \le \epsilon,\\[4pt]
            1,         & \text{if } \omega(e) > \epsilon.
        \end{cases}
    \end{equation}
    Let
        $m(\epsilon) \coloneqq 
        \#\bigl\{ (0,d) \in D_0^{\mathrm{Rips}}(G) \mid \epsilon < d < \infty \bigr\}$,
    i.e., the number of finite $0$-dimensional persistence points of $G$ 
    whose death times exceed $\epsilon$.  
    Then,
    \begin{equation}
        W_p\!\bigl(D_0^{\mathrm{Rips}}(G),\,D_0^{\mathrm{Rips}}(G_\epsilon)\bigr)
        \;\le\; m(\epsilon)^{1/p}\,(1-\epsilon)
    \end{equation}
    and $0 \leq m(\epsilon) \leq N-1$ where $W_p$ denotes the $p$-Wasserstein distance.
\end{theorem}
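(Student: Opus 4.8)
The plan is to reduce everything to the standard description of $0$-dimensional persistence for a weighted graph in terms of single-linkage merging (equivalently, the minimum spanning tree), and then to observe that $G$ and $G_\epsilon$ share their entire ``sub-$\epsilon$'' filtration, so the two diagrams can differ only in a controlled number of points whose death times are completely pinned down. I will then exhibit one explicit matching realizing the claimed bound.

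First I would recall, and sketch, the structural fact: for a finite weighted graph $H$ on $N$ vertices, the edge-weight filtration $H^{(t)} = \{\, e : \omega(e) \le t \,\}$ (only the $1$-skeleton matters for $H_0$, so the clique/Rips complex and the graph give the same $0$-dimensional persistence) produces a diagram consisting of the essential point $(0,\infty)$ together with exactly $N-1$ finite points $(0,w)$, where the multiset of $w$'s is the multiset of edge weights of a minimum spanning tree of $H$ (each merge of two components contributes one death; all births are $0$). This already yields $0 \le m(\epsilon) \le N-1$, since $m(\epsilon)$ counts a subset of these $N-1$ finite points; and since $\omega \le 1$ by construction, every finite death time of $G$ lies in $[0,1]$, so $m(\epsilon)$ is precisely the number of finite deaths lying in $(\epsilon,1]$.

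Next comes the core comparison. For every $t \le \epsilon$ one has $G^{(t)} = G_\epsilon^{(t)}$ as subgraphs, because the rule defining $\omega_\epsilon$ leaves all edges of weight $\le \epsilon$ untouched; hence the two $H_0$-persistence modules agree on $[0,\epsilon]$ and the parts of $D_0^{\mathrm{Rips}}(G)$ and $D_0^{\mathrm{Rips}}(G_\epsilon)$ with death $\le \epsilon$ coincide as multisets. On the complement, for $\epsilon \le t < 1$ the subgraph $G_\epsilon^{(t)}$ is constant and equal to $G^{(\epsilon)}$, and $G_\epsilon^{(1)}$ is the complete graph; thus in $G_\epsilon$ every component of $G^{(\epsilon)}$ that has not yet died dies exactly at time $1$. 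Writing $c$ for the number of connected components of $G^{(\epsilon)}=G_\epsilon^{(\epsilon)}$ (well-defined, independent of which graph we use), both diagrams therefore have exactly $c-1 = m(\epsilon)$ finite points with death $> \epsilon$: in $D_0^{\mathrm{Rips}}(G)$ these are $(0,d_1),\dots,(0,d_{m(\epsilon)})$ with $\epsilon < d_i \le 1$, and in $D_0^{\mathrm{Rips}}(G_\epsilon)$ they are $m(\epsilon)$ copies of $(0,1)$.

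Finally, I would bound $W_p$ from above by a single bijection: match the (identical) death-$\le\epsilon$ points to one another and the essential point $(0,\infty)$ to $(0,\infty)$, all at cost $0$, and match each $(0,d_i)$ to a distinct copy of $(0,1)$. Since the birth coordinates agree, the ground cost of the $i$-th pair is $|1-d_i| = 1-d_i < 1-\epsilon$ (strict because $d_i > \epsilon$), so the total transport cost of this matching is $\left(\sum_{i=1}^{m(\epsilon)} (1-d_i)^p\right)^{1/p} \le m(\epsilon)^{1/p}(1-\epsilon)$; as $W_p$ is the infimum over all matchings, the inequality follows, with the edge cases $m(\epsilon)=0$ (e.g.\ $\epsilon = 1$) giving $W_p = 0$. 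The only place needing genuine care is the structural lemma together with the bookkeeping around tied edge weights: I would avoid any tie-breaking convention by phrasing the argument entirely in terms of the filtration $\{H^{(t)}\}$ and its connected components rather than a particular run of Kruskal's algorithm, which makes ``the death-$\le\epsilon$ parts coincide'' and the component count $c$ unambiguous.
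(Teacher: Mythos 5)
Your proposal is correct and follows essentially the same route as the paper's proof: compare the two filtrations, note they agree up to scale $\epsilon$ and that every class of $G_\epsilon$ alive past $\epsilon$ dies exactly at $1$, then bound $W_p$ by the explicit bijection sending each $(0,d)$ with $d>\epsilon$ to $(0,1)$, with per-point cost at most $1-\epsilon$. Your added single-linkage/MST lemma and the component-count argument ($m(\epsilon)=c-1$ for both graphs) just make explicit the bookkeeping the paper states more briefly, including the bound $0\le m(\epsilon)\le N-1$.
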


Appendix~\ref{app:proof} presents the proof of this theorem.
Let $c(\epsilon)$ denote the number of connected components in $\mathrm{VR}_\epsilon(G)$ which is equal to $m(\epsilon)+1$. 
Therefore,
\begin{equation}
   W_p\!\bigl(D_0^{\mathrm{Rips}}(G),\,D_0^{\mathrm{Rips}}(G_\epsilon)\bigr)
   \;\le\; (c(\epsilon)-1)^{1/p}\,(1-\epsilon).
\end{equation}

\begin{table*}[!ht]
\centering
\caption{
Average connected components $c(\epsilon)$ and sparsity by $\lambda$ value on random point clouds ($n=512$, 10 trials).
Average number of connected components $c(\epsilon)$ and sparsity for different $\lambda$ values on random point clouds ($N = 512$, 10 trials).
According to the theoretical upper bound of our approximation, the approximation error becomes exactly zero when the sparsified graph $G_{\epsilon}$ forms a single connected component ($c(\epsilon) = 1$).
The sparsity term indicates the percentage of edges retained when constructing the sparsified graph used to approximate the minimum spanning tree, whose computational cost is $\mathcal{O}(E \log V)$.
}
\label{tab:connectivity_sparsity_matrix}
\small
\setlength{\tabcolsep}{4pt}
\resizebox{0.95\textwidth}{!}{%
\begin{tabular}{c *{5}{c} *{5}{c} @{\hspace{8pt}} *{5}{c} *{5}{c}}
\toprule
\multirow{4}{*}{$N$}& \multicolumn{10}{c}{Connected components $c(\epsilon)$}
& \multicolumn{10}{c}{Sparsity} \\
\cmidrule(lr){2-11}\cmidrule(lr){12-21}
& \multicolumn{5}{c}{Uniform ($\lambda$)} & \multicolumn{5}{c}{Gaussian ($\lambda$)}
& \multicolumn{5}{c}{Uniform ($\lambda$)} & \multicolumn{5}{c}{Gaussian ($\lambda$)} \\
\cmidrule(lr){2-6}\cmidrule(lr){7-11}\cmidrule(lr){12-16}\cmidrule(lr){17-21}
    & $1.0$ & $0.5$ & $0.0$ & $-0.5$ & $-1.0$
    & $1.0$ & $0.5$ & $0.0$ & $-0.5$ & $-1.0$
    & $1.0$ & $0.5$ & $0.0$ & $-0.5$ & $-1.0$
    & $1.0$ & $0.5$ & $0.0$ & $-0.5$ & $-1.0$ \\
\midrule
64  & 1.6 & 1.1 & 1.0 & 1.0 & 1.0 & 4.1 & 1.4 & 1.1 & 1.0 & 1.0
    & 0.158 & 0.306 & 0.496 & 0.690 & 0.840 & 0.157 & 0.309 & 0.504 & 0.693 & 0.840  \\
128 & 1.7 & 1.0 & 1.0 & 1.0 & 1.0 & 3.1 & 1.2 & 1.0 & 1.0 & 1.0
    & 0.160 & 0.310 & 0.499 & 0.692 & 0.841 & 0.160 & 0.311 & 0.502 & 0.694 & 0.841 \\
256 & 1.1 & 1.0 & 1.0 & 1.0 & 1.0 & 3.2 & 1.2 & 1.1 & 1.0 & 1.0
    & 0.159 & 0.308 & 0.499 & 0.692 & 0.841 & 0.159 & 0.310 & 0.503 & 0.693 & 0.842  \\
512 & 1.0 & 1.0 & 1.0 & 1.0 & 1.0 & 2.2 & 1.0 & 1.0 & 1.0 & 1.0
    & 0.158 & 0.308 & 0.499 & 0.690 & 0.841 & 0.159 & 0.310 & 0.502 & 0.692 & 0.841 \\
\bottomrule
\end{tabular}}
\end{table*}
\begin{table*}[!ht]
\centering
\caption{Top-10 accuracy (\%) of zero-shot classification on CIFAR-100 across 13 languages (Full vs. Low).}
\label{tab:cifar100_lang_top10}
\resizebox{\textwidth}{!}{%
\begin{tabular}{llcccccccccccccc}
\toprule
\multirow{3}{*}{Setting} & \multirow{3}{*}{Model} & \multicolumn{13}{c}{Languages (13)} & \multirow{3}{*}{\textit{Avg}} \\
\cmidrule(lr){3-15}
& & En & Fr & Es & De & It & Ru & Pl & Tr & Da & Ja & Zh & Ko & Vi \\
\midrule
 & CLIP & 91.06 & 66.18 & 63.69 & 64.05 & 49.33 & 11.95 & 22.03 & 24.73 & 32.42 & 32.80 & 21.56 & 12.38 & 15.32 & 39.04 \\
& MCLIP & 91.97 & 85.66 & 87.10 & 85.74 & 88.23 & 87.98 & \textbf{85.38} & 87.65 & 87.83 & 53.60 & 89.50 & 87.20 & 86.26 & 84.93 \\
\textbf{Full data} & ToMCLIP($L_{\text{dm}}$) & \textbf{91.99} & 84.77 & 84.63 & \textbf{89.63} & 86.17 & 87.78 & 84.86 & 87.35 & 86.88 & 56.27 & 88.11 & 87.94 & 86.98 & 84.87 \\
\textbf{(2M smaples)} & ToMCLIP($L_{\text{ta}}$) & 91.48 & 85.41 & 84.23 & 87.85 & 88.49 & \textbf{89.43} & 84.35 & \textbf{88.76} & \textbf{87.98} & \textbf{58.57} & \textbf{89.75} & \textbf{88.76} & \textbf{89.41} & 85.73 \\
 & ToMCLIP & 91.40 & \textbf{87.59} & \textbf{87.37} & 89.30 & \textbf{89.11} & 87.66 & 83.59 & 88.59 & 87.79 & 57.95 & 88.68 & 88.36 & 88.17 & \textbf{85.81} \\
 \midrule
 & CLIP & \textbf{91.06} & 66.18 & 63.69 & 64.05 & 49.33 & 11.95 & 22.03 & 24.73 & 32.42 & 32.80 & 21.56 & 12.38 & 15.32 & 39.04 \\
& MCLIP & 79.72 & 67.60 & 62.20 & 71.41 & 59.68 & 69.80 & 64.55 & 58.71 & 73.31 & 60.68 & 78.27 & 65.43 & 71.38 & 67.90 \\
\textbf{Low resource} & ToMCLIP($L_{\text{dm}}$) & 79.46 & 67.99 & 62.51 & 70.81 & 60.75 & 69.30 & 64.02 & 57.21 & 72.64 & 59.20 & 77.43 & 67.42 & 70.07 & 67.60 \\
\textbf{(1\% subset)} & ToMCLIP($L_{\text{ta}}$) & 80.00 & 67.37 & 62.66 & 70.09 & 60.88 & 70.31 & 65.22 & 59.50 & 72.68 & 60.94 & 77.36 & 67.01 & \textbf{73.37} & 68.26 \\
& ToMCLIP & 80.75 & \textbf{68.56} & \textbf{63.85} & \textbf{71.49} & \textbf{62.91} & \textbf{71.23} & \textbf{65.50} & \textbf{60.80} & \textbf{73.75} & \textbf{62.39} & \textbf{78.82} & \textbf{67.96} & 72.44 & \textbf{69.26} \\
\bottomrule
\end{tabular}
}
\end{table*}
 
As $\epsilon$ increases, more edges are retained, sparsity decreases, and the number of connected components $c(\epsilon)$ monotonically decreases. 
In particular, a critical value $\epsilon_*$ exists such that $c(\epsilon)=1$ for all $\epsilon \ge \epsilon_*$, (i.e., $\mathrm{VR}_\epsilon(G)$ becomes connected). 
From an algorithmic perspective, the critical trade-off lies in selecting $\epsilon$ so that $\mathrm{VR}_\epsilon(G)$ remains sparse while maintaining a small number of connected components. 
The experiments confirm that moderate values of $\epsilon$ already achieve near connectivity with a low edge density, making the sparsification highly effective in practice (Section~\ref{sec:conn_spars}).

\section{RESULTS}

\subsection{Connectivity and Sparsity Analysis of Approximation Method}
\label{sec:conn_spars}

We evaluated the effect of the threshold parameter $\epsilon$ on the sparsity and connectivity 
of the sparsified graph $G_\epsilon = (V,E_\epsilon)$. 
Across uniform and Gaussian random point clouds in $\mathbb{R}^n$ 
with $N \in \{64,128,256,512\}$, we measured the number of connected components $c(\epsilon)$ 
and the average sparsity ($|E_\epsilon| / \binom{N}{2}$) when $\epsilon = \mu -\lambda \sigma$ 
for $\lambda \in \{ 1.0, 0.5, 0, -0.5, -1.0 \}$, where $\mu$ and $\sigma$ denote the mean and standard deviation of all weights in $G$.

Table~\ref{tab:connectivity_sparsity_matrix} reveals a clear monotonic trade-off.
As $\lambda$ decreases, the threshold $\epsilon=\mu - \lambda\sigma$ increases, 
leading to a higher edge density and fewer connected components. 
At a large positive value of $\lambda$, graphs are sparse but fragmented into multiple components, 
particularly for Gaussian point clouds, which exhibit a stronger central concentration. 
As $\lambda$ becomes smaller, the graphs quickly become connected ($c(\epsilon)=1$), and sparsity rises above 0.5. 
At $\lambda=0.5$, the graphs achieve near connectivity across all $N$, 
while retaining only about 30\% of the edges. 
ToMCLIP adopts this setting, as it offers an effective balance between sparsity and connectivity.

\begin{table}[!ht]
\centering
\caption{Average Top-$k$ accuracy (\%) of the zero-shot classification on CIFAR-100 across 13 languages.}
\label{tab:cifar100_multilingual_avg}
\small
\resizebox{0.48\textwidth}{!}{%
\begin{tabular}{lcccccc}
\toprule
{} & \multicolumn{3}{c}{\textbf{Low resource}} & \multicolumn{3}{c}{\textbf{Full data}} \\
\cmidrule(lr){2-4} \cmidrule(lr){5-7}
 & Top-1 & Top-5 & Top-10 & Top-1 & Top-5 & Top-10 \\
\midrule
CLIP & 20.29 & 32.47 & 39.04 & 20.29 & 32.47 & 39.04 \\
MCLIP & 30.21 & 56.67 & 67.90 & 50.72 & 76.49 & 84.93 \\
ToMCLIP($L_{\text{dm}}$) & 31.12 & 56.47 & 67.60 & 50.53 & 75.84 & 84.87 \\
ToMCLIP($L_{\text{ta}}$) & 30.45 & 57.14 & 68.26 & 50.73 & 77.12 & 85.73 \\
ToMCLIP & \textbf{31.91} & \textbf{58.15} & \textbf{69.26} & \textbf{51.32} & \textbf{77.46} & \textbf{85.81} \\
\bottomrule
\end{tabular}}
\end{table}

\subsection{Evaluation of ToMCLIP}
We evaluate MCLIP~\citep{carlsson2022cross} and ToMCLIP under two training conditions: (1) using the full available dataset and (2) using only 1\% of the data for the low-resource setting. 
This setup is designed to mimic realistic situations where only a few of multilingual annotated data are available for training.
The experiments use the multilingual caption corpus from~\citet{carlsson2022cross} (2M precomputed embeddings), augmented with Korean translations. 
We employ XLM-RoBERTa~\citep{conneau2019unsupervised} for the multilingual text encoder and ViT-B/32 variant~\citep{radford2021learning} for the CLIP model. 
Appendix~\ref{app:data} provides details on dataset preparation, training and evaluation.
The ToMCLIP($L_{\text{dm}}$), ToMCLIP($L_{\text{ta}}$), and ToMCLIP denote models trained with the proposed total loss \(L_{\text{total}}\) (\ref{eq:total_loss}) using the coefficients \((\alpha,\beta,\gamma)=(1,0.01,0)\), \((1,0,0.01)\), and \((1,0.01,0.01)\), respectively.
In computing $L_{\text{ta}}$, this approach is restricted to 0-dimensional features ($H_0$) in the persistence diagram. 
The birth times of 1-dimensional features, corresponding to the remaining edge weights after 0-dimensional features merge, provide no substantial additional information, because the pairwise distance matrix is already optimized using the MSE loss.
The results are from a single training run, consistent with standard research practices~\citep{radford2021learning, carlsson2022cross, chen2023mclip, yang2024embracing}.
For the 1\% low-resource setting, this work reports the mean over three independent runs.
Appendix~\ref{app:vit+} presents the results replacing the CLIP image encoder with ViT-B/16+~\citep{cherti2023reproducible}.

\begin{figure*}[!ht]
  \centering
  \includegraphics[width=0.88\textwidth]{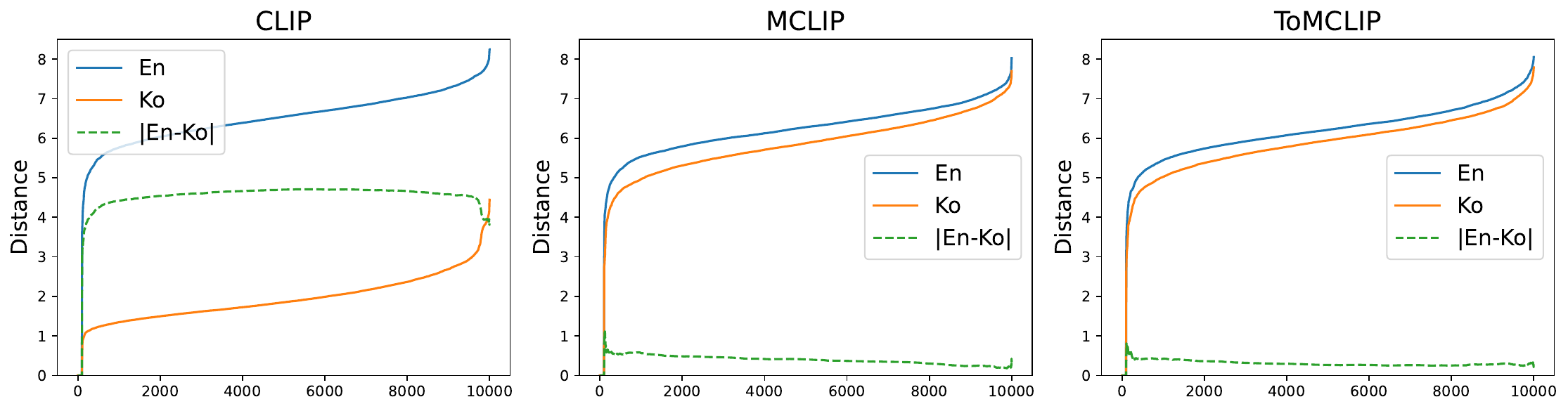}
  \caption{Sorted pairwise distance curves of English (En) vs.\ Korean (Ko) embeddings. 
  }
  \label{fig:pairwise_en_ko}
\end{figure*}

\begin{figure*}[!th]
  \centering
  \includegraphics[width=1\textwidth]{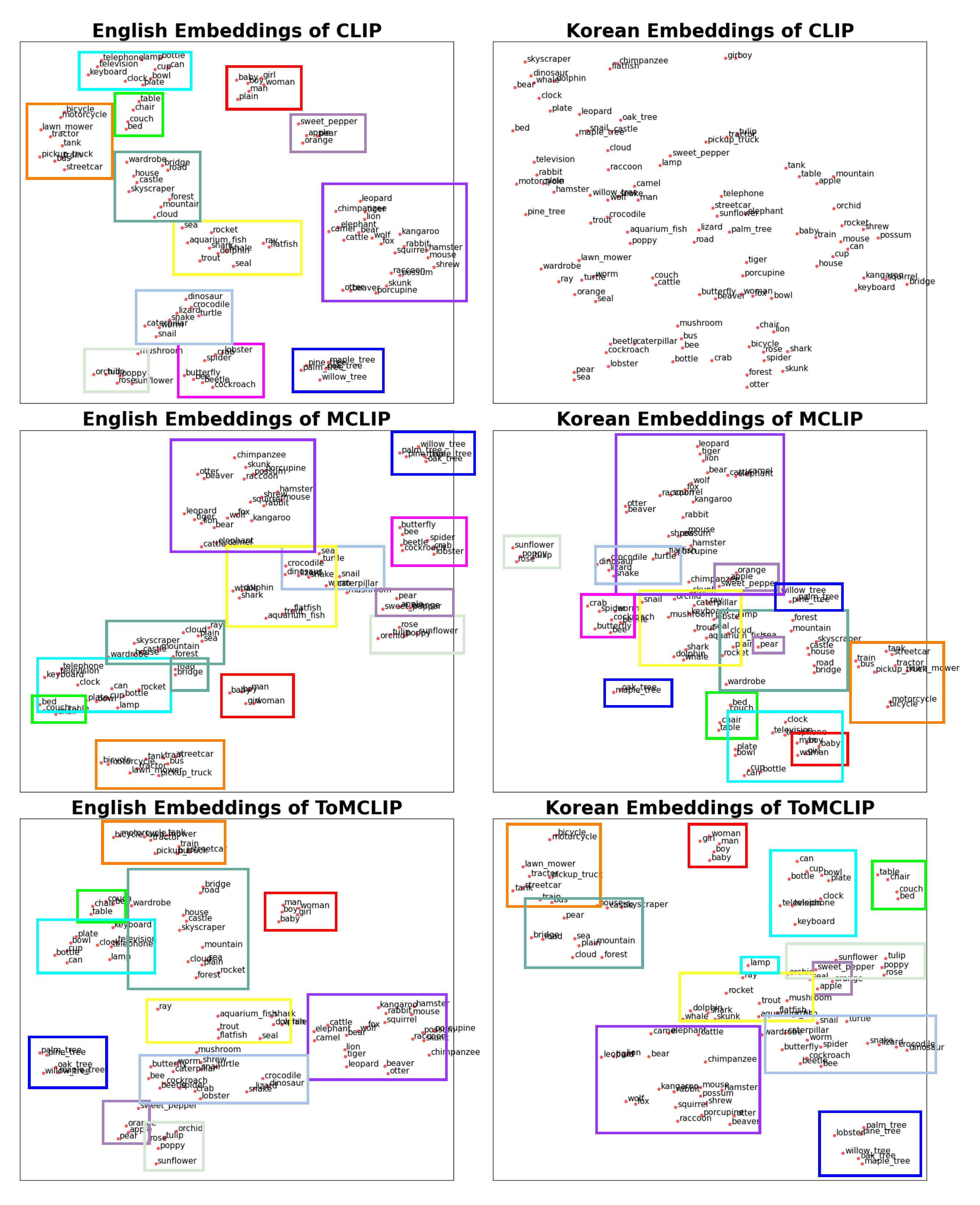}
  \caption{Two-dimensional t-SNE projections of English and Korean text embeddings
  }
  \label{fig:tsne}
\end{figure*}

\subsubsection{Evaluation on CIFAR-100}
We evaluate the zero-shot classification on CIFAR-100 to assess the alignment between the image and multilingual text embeddings. 
At inference, we use class-name prompts translated into 13 languages (e.g., \textit{``a photo of a \{class\}''}). 
Appendix~\ref{app:prompts} presents the complete prompt list. 
Table~\ref{tab:cifar100_lang_top10} reports the Top-10 accuracy (\%) per language (the Top-1 and Top-5 are provided in Tables~\ref{tab:cifar100_lang_top1} and~\ref{tab:cifar100_lang_top5} in Appendix~\ref{app:top-15_results}). 
In the \textbf{Full} setting, ToMCLIP surpasses MCLIP in all but one language (Polish, ``Pl''), yielding a higher average Top-10 accuracy overall (\(+0.88\)).
In the \textbf{Low} setting, ToMCLIP outperforms MCLIP across all 13 languages (\(+1.36\) on average). 
Note that En in the \textbf{Low} does not indicate catastrophic forgetting: CLIP’s text encoder is not used when evaluating (To)MCLIP.
Although MCLIP provides multilingual support, its cross-modal alignment remains suboptimal, whereas preserving the topological structure enables ToMCLIP to deliver more robust and consistent multilingual representations.
Table~\ref{tab:cifar100_multilingual_avg} summarizes the average Top-\(k\) (\(k\in\{1,5,10\}\)) accuracy.
The ToMCLIP performs the best for all \(k\) and both regimes. 
Among the ablations, ToMCLIP\((L_{\text{dm}})\) matches MCLIP, whereas ToMCLIP\((L_{\text{ta}})\) consistently improves upon MCLIP.
Using both losses together, ToMCLIP yields the strongest results. 
Adding \(L_{\text{dm}}\) on the baseline \(L_{\text{pw}}\) alone does not yield additional cross-modal alignment, whereas \(L_{\text{ta}}\) alone induces extra alignment and improves accuracy.
Nevertheless, \(L_{\text{dm}}\) is beneficial in conjunction with \(L_{\text{ta}}\), suggesting a complementary role that reinforces the alignment signal provided by \(L_{\text{ta}}\).
Appendices~\ref{app:batch_size} -~\ref{app:swd_K} presents the ablation studies on batch size, loss coefficients, homology dimension, graph sparsification threshold, and the number of SWD projections $K$, respectively.

\subsubsection{Topological Alignment Analysis}
\label{sec:analysis}

The topological alignment objective incorporated two loss components, $L_{ta}$ and $L_{dm}$. 
To assess their effects on the image-text latent space, we compare CLIP, MCLIP and ToMCLIP (trained with \(L_{\text{dm}}\) and \(L_{\text{ta}}\)). 
We use the same prompts for English (En) and Korean (Ko) derived from CIFAR-100 class labels as in the zero-shot evaluation (Appendix~\ref{app:prompts}). 

\paragraph{Visualization of Shared Latent Space.}
We visualize the sorted pairwise distance curves for En and Ko embeddings.
Figure~\ref{fig:pairwise_en_ko} displays distances sorted in ascending order and the dashed green line represents the absolute pairwise distance difference $|\mathrm{En} - \mathrm{Ko}|$.
In CLIP, a substantial discrepancy exists between the En and Ko distance distributions, reflected by a high $|\mathrm{En} - \mathrm{Ko}|$ curve because the CLIP model is trained using En caption datasets. 
The MCLIP, which is trained using multilingual data, exhibits improved alignment, significantly reducing the $|\mathrm{En} - \mathrm{Ko}|$ differences. 
Furthermore, ToMCLIP enhances the alignment, producing closer En and Ko curves.
These results visually confirm that ToMCLIP achieves the highest degree of cross-lingual geometric consistency in terms of the pairwise distance, suggesting that the topological alignment loss bridges language-induced gaps.

In addition to the distance curve analysis, we provide a visualization of the shared embedding space for En and Ko CIFAR-100 class label embeddings (Figure~\ref{fig:tsne}). 
Each point represents the embedding of a prompted class label, and distances between points reflect semantic relationships in the embedding space.
For each model (i.e., CLIP, MCLIP, and ToMCLIP), we project the embeddings in two dimensions using t-SNE~\citep{maaten2008visualizing} and highlight the class-level clusters. 
The bounding boxes were manually defined based on the En CLIP embeddings to capture coherent semantic groups, and the same grouping scheme was consistently applied to MCLIP and ToMCLIP for comparability. 
In CLIP, although the En embeddings form clear clusters, the Ko embeddings remain scattered, reflecting poor cross-lingual alignment. 
Moreover, MCLIP substantially improves alignment, with Ko embeddings aligning more closely to the manually defined clusters. 
Nevertheless, MCLIP still presents structural misalignment, as some clusters are mixed in the center.
Furthermore, the red box overlaps with neighboring groups, and the blue box is split into two subregions, which are clearly distinguished in the En embeddings. 
The ToMCLIP refines this structure, producing highly consistent cross-lingual clusters that preserve the semantic grouping. 
The red and blue clusters become well separated from other groups in En and Ko embeddings, highlighting the robustness of the topological alignment. 
Conbined with the distance curve, this visualization demonstrates that ToMCLIP minimizes pairwise distance discrepancies and preserves higher-level semantic structures across languages, providing complementary evidence for the effectiveness of the proposed topological alignment loss.

\begin{table}[!ht]
\centering
\caption{Mean and RMSE of $|\mathrm{En}-\mathrm{Ko}|$.}
\label{tab:pairwise-dist}
\resizebox{0.25\textwidth}{!}{%
\begin{tabular}{lcc}
\toprule
Model & Mean & RMSE \\
\midrule
CLIP   & 4.5238 & 4.5509 \\
MCLIP  & 0.3920 & 0.4081 \\
ToMCLIP & \textbf{0.3050} & \textbf{0.3133} \\
\bottomrule
\end{tabular}}
\end{table}

\begin{table*}[!ht]
\centering
\caption{Comparison of topological distances between English and Korean embeddings.}
\label{tab:topo_alignment}
\small
\resizebox{0.75\textwidth}{!}{%
\begin{tabular}{lcccccc}
\toprule
\multirow{2}{*}{Comparison} & \multirow{2}{*}{$W_2^{c}$} & \multicolumn{2}{c}{$W_2$} & \multicolumn{2}{c}{$\mathrm{SW}_2^{(50)}$} \\
\cmidrule(lr){3-4} \cmidrule(lr){5-6}
 & & 0-dim & 1-dim & 0-dim & 1-dim \\
\midrule
CLIP (En) vs. CLIP (Ko)       & 7.7870 & 34.5016 & 1.0468 & 2.8261 & 4.1593 \\
MCLIP (En) vs. MCLIP (Ko)      & 2.5988 & 5.1995 & 0.9250 & 0.3670 & 0.5964 \\
ToMCLIP (En) vs. ToMCLIP (Ko)    & \textbf{2.4929} & \textbf{4.2072} & \textbf{0.7444} & \textbf{0.3056} & \textbf{0.3393} \\
\bottomrule
\end{tabular}}
\end{table*}

\begin{table*}[!t]
\centering
\caption{Multilingual retrieval on xFlickr\&CO. Average R@k (\%) across 8 languages (Low vs. Full).
\textcolor{red}{$\blacktriangle$} indicates an improvement over MCLIP (same setting and direction),
\textcolor{blue}{$\blacktriangledown$} indicates a decrease.}
\label{tab:xflickrco_avg_low_full}
\resizebox{0.9\textwidth}{!}{%
\begin{tabular}{llllllll}
\toprule
\multirow{3}{*}{Direction} & \multirow{3}{*}{Model} & \multicolumn{3}{c}{\textbf{Low resource (1\% subset)}} & \multicolumn{3}{c}{\textbf{Full data (2M samples)}} \\
\cmidrule(lr){3-5} \cmidrule(lr){6-8}
 &  & R@1 & R@5 & R@10 & R@1 & R@5 & R@10 \\
\midrule
\multirow{5}{*}{IR} & CLIP & 12.08 & 22.12 & 27.19 & 12.08 & 22.12 & 27.19 \\
 & MCLIP & 33.51 & 62.04 & 73.70 & 50.13 & 77.51 & 85.86 \\
 & ToMCLIP($L_{\text{dm}}$) & 34.49\,\deltaup{0.98} & 62.93\,\deltaup{0.89} & \textbf{74.50}\,\deltaup{0.80} & \textbf{50.85}\,\deltaup{0.72} & \textbf{78.25}\,\deltaup{0.74} & \textbf{86.56}\,\deltaup{0.70} \\
 & ToMCLIP($L_{\text{ta}}$) & \textbf{34.50}\,\deltaup{0.99} & \textbf{62.96}\,\deltaup{0.93} & 74.45\,\deltaup{0.74} & 50.79\,\deltaup{0.66} & 78.01\,\deltaup{0.50} & 86.19\,\deltaup{0.33} \\
 & ToMCLIP & 34.03\,\deltaup{0.52} & 62.59\,\deltaup{0.56} & 74.00\,\deltaup{0.30} & 50.76\,\deltaup{0.63} & 77.99\,\deltaup{0.48} & 86.48\,\deltaup{0.62} \\
\midrule
\multirow{5}{*}{TR} & CLIP & 16.01 & 28.75 & 35.40 & 16.01 & 28.75 & 35.40 \\
 & MCLIP & 39.39 & 68.02 & 78.65 & 53.38 & 79.48 & 87.34 \\
 & ToMCLIP($L_{\text{dm}}$) & 39.71\,\deltaup{0.32} & 68.63\,\deltaup{0.61} & 79.38\,\deltaup{0.74} & 54.01\,\deltaup{0.63} & \textbf{80.38}\,\deltaup{0.90} & \textbf{88.08}\,\deltaup{0.74} \\
 & ToMCLIP($L_{\text{ta}}$) & \textbf{40.29}\,\deltaup{0.90} & \textbf{69.18}\,\deltaup{1.16} & \textbf{79.61}\,\deltaup{0.97} & 53.83\,\deltaup{0.45} & 79.91\,\deltaup{0.43} & 87.80\,\deltaup{0.46} \\
 & ToMCLIP & 39.51\,\deltaup{0.12} & 68.42\,\deltaup{0.40} & 78.96\,\deltaup{0.32} & \textbf{54.07}\,\deltaup{0.69} & 79.98\,\deltaup{0.50} & 87.67\,\deltaup{0.33} \\
\bottomrule
\end{tabular}}
\end{table*}

\paragraph{Quantitative Analysis of Shared Latent Space.}

The $L_{dm}$ term minimizes the MSE between two pairwise distance matrices. 
Table~\ref{tab:pairwise-dist} reports the mean and RMSE of the absolute sorted pairwise distance differences ($|\mathrm{En}-\mathrm{Ko}|$) between En and Ko embeddings.
The proposed ToMCLIP achieves substantially lower values than MCLIP, indicating improved alignment.

The $L_{\text{ta}}$ promotes topological consistency by minimizing the distance between the persistence diagrams of the two embedding sets. 
By the stability inequality (Eq.~\ref{eq:stability}), decreasing $L_{\text{ta}}$ tightens a certified lower bound on the $p$-Wasserstein distance between the corresponding point clouds. 
To verify that minimizing $L_{\text{ta}}$ yields lower $W_p^{\text{c}}$, Table~\ref{tab:topo_alignment} reports three metrics: $W_2^{\text{c}}$ (2-Wasserstein between the raw embeddings), $W_2$ (2-Wasserstein between the persistence diagrams), and $\mathrm{SW}_2^{(50)}$ (sliced 2-Wasserstein between the  persistence diagrams using 50 projections).
Overall, ToMCLIP yields the lowest cross-lingual distances across all metrics, confirming that topology-aware training with $L_{\text{ta}}$ enhances the topological alignment.

\subsubsection{Multilingual Image-Text Retrieval on xFlickr\&CO}
\label{sec:retrieval_xflickrco}

This work evaluates multilingual image-text retrieval on xFlickr\&CO~\citep{bugliarello-etal-2022-iglue} across eight languages (En, Es, De, Id, Ru, Tr, Ja, Zh). The benchmark comprises \(2\text{K}\) images (\(1\text{K}\) from Flickr30K and \(1\text{K}\) from MSCOCO), each paired with a single parallel caption in all eight languages, enabling evaluation of both retrieval directions. 
This work presents the results for image retrieval (\textbf{IR}; text\(\rightarrow\)image) and text retrieval (\textbf{TR}; image\(\rightarrow\)text). Following standard practice, we compute recall at $K$ (R@K, \(K\in\{1,5,10\}\)) and average the scores across languages. 
For each language, R@K is evaluated over \(2{,}000\) queries.

Table~\ref{tab:xflickrco_avg_low_full} summarizes the language-averaged results and Table~\ref{tab:xflickrco_top1} (Appendix~\ref{app:retrieval_xflickrco}) breaks down R@1 by language.
In both tables, red \textcolor{red}{$\blacktriangle$} (\ blue \textcolor{blue}{$\blacktriangledown$}) marks improvement (degradation) over MCLIP under the same settings and direction. In the \textbf{Full} regime, \mbox{ToMCLIP($L_{\text{dm}}$)}, \mbox{ToMCLIP($L_{\text{ta}}$)}, and \mbox{ToMCLIP} yield consistent average gains over MCLIP for \textbf{IR} and \textbf{TR} across all metrics (R@1,5, and 10). In the more challenging \textbf{Low} regime, they also achieve consistent average gains over MCLIP. These results indicate that the proposed losses improve cross-lingual alignment in the shared embedding space.



\section{CONCLUSION}
This work introduces ToMCLIP, a topology-aware alignment framework for multilingual contrastive VLMs, augmenting instance-level matching with topology-preserving objectives. 
The ToMCLIP improves the zero-shot CIFAR-100 performance, and stronger multilingual retrieval performance on the xFlickr\&CO.
Furthermore, ToMCLIP enhances the structural coherence of the shared embedding space.
Beyond multilingual alignment, the topological alignment loss provides a general objective for aligning embedding spaces, encompassing cross-modal alignment, knowledge distillation, and dimensionality reduction. 
While our experiments focus on $H_0$ features and single-run evaluations, extending to higher-order topological signals and more comprehensive statistical validation remains an important direction for future work.

\subsubsection*{Acknowledgements}
This paper is supported by Basic Science Research Institute Fund, whose NRF grant number is RS -2021-NR060139.

\bibliography{sections/reference}

\section*{Checklist}



\begin{enumerate}

  \item For all models and algorithms presented, check if you include:
  \begin{enumerate}
    \item A clear description of the mathematical setting, assumptions, algorithm, and/or model. 
    [Yes, see Section~\ref{sec:topological_alignment}.]
    \item An analysis of the properties and complexity (time, space, sample size) of any algorithm. [Yes, see Section~\ref{sec:approximate_PD} and~\ref{sec:conn_spars}.]
    \item (Optional) Anonymized source code, with specification of all dependencies, including external libraries. [Yes, see zipped file attached.]
  \end{enumerate}

  \item For any theoretical claim, check if you include:
  \begin{enumerate}
    \item Statements of the full set of assumptions of all theoretical results. [Yes, see Section~\ref{sec:approximate_PD}.]
    \item Complete proofs of all theoretical results. [Yes, see Appendix~\ref{app:proof}.]
    \item Clear explanations of any assumptions. [Yes, see Section~\ref{sec:approximate_PD}.]     
  \end{enumerate}

  \item For all figures and tables that present empirical results, check if you include:
  \begin{enumerate}
    \item The code, data, and instructions needed to reproduce the main experimental results (either in the supplemental material or as a URL). [Yes, see Appendix~\ref{app:data}.]
    \item All the training details (e.g., data splits, hyperparameters, how they were chosen). [Yes, see Appendix~\ref{app:data}.]
    \item A clear definition of the specific measure or statistics and error bars (e.g., with respect to the random seed after running experiments multiple times). 
    [In the \textbf{Full} setting, results are from a single run following previous work~\citep{radford2021learning,carlsson2022cross,chen2023mclip}. However, in the \textbf{Low} setting, we report the mean over 3 independent runs.]
    \item A description of the computing infrastructure used. (e.g., type of GPUs, internal cluster, or cloud provider). [Yes, see Appendix~\ref{app:training_time}]
  \end{enumerate}

  \item If you are using existing assets (e.g., code, data, models) or curating/releasing new assets, check if you include:
  \begin{enumerate}
    \item Citations of the creator If your work uses existing assets. [Yes, see Appendix~\ref{app:data}.]
    \item The license information of the assets, if applicable. [Not Applicable]
    \item New assets either in the supplemental material or as a URL, if applicable. [Not Applicable]
    \item Information about consent from data providers/curators. [Not Applicable]
    \item Discussion of sensible content if applicable, e.g., personally identifiable information or offensive content. [Not Applicable]
  \end{enumerate}

  \item If you used crowdsourcing or conducted research with human subjects, check if you include:
  \begin{enumerate}
    \item The full text of instructions given to participants and screenshots. [Not Applicable]
    \item Descriptions of potential participant risks, with links to Institutional Review Board (IRB) approvals if applicable. [Not Applicable]
    \item The estimated hourly wage paid to participants and the total amount spent on participant compensation. [Not Applicable]
  \end{enumerate}

\end{enumerate}

\clearpage
\appendix
\thispagestyle{empty}

\onecolumn
\aistatstitle{Supplementary Materials: Appendices}


\setcounter{tocdepth}{3}
\tableofcontents 

\section{RELATED WORKS}
\label{app:relatedworks}

\subsection{Contrastive Vision-Language Models}
\label{app:contrastive-vlms}

Contrastive vision-language models (VLMs) learn joint representations of images and text by maximizing the similarity between matched pairs while minimizing it for unmatched pairs. CLIP (Contrastive Language-Image Pre-training)~\citep{radford2021learning} pioneered this approach by training dual encoders on 400 million image-text pairs collected from the internet. The model employs a symmetric cross-entropy loss over the similarity matrix of image and text embeddings within each batch, enabling zero-shot transfer to downstream tasks without task-specific fine-tuning.

ALIGN~\citep{jia2021scaling} scaled this approach further by leveraging a noisy dataset of over one billion image-text pairs, demonstrating that the noise in web-scraped data can be overcome with sufficient scale. Unlike CLIP, which uses curated data, ALIGN shows that raw alt-text data can be effective when combined with a simple dual-encoder architecture and contrastive learning objective.

Several subsequent works have improved upon these foundations. FLIP~\citep{li2023scaling} introduced a masking strategy during training to reduce computational costs while maintaining performance. DeCLIP~\citep{li2022supervision} enhanced data efficiency through self-supervised learning and nearest-neighbor supervision. FILIP~\citep{yao2022filip} improved fine-grained alignment by introducing token-wise maximum similarity between image patches and text tokens.

The key advantages of contrastive models include: (1) computational efficiency during inference, as image and text encoders can be cached and indexed separately; (2) flexibility in swapping encoders for different modalities or languages; and (3) strong performance on retrieval tasks. These properties make contrastive models particularly suitable for multilingual extensions, as the text encoder can be replaced or fine-tuned for different languages while keeping the image encoder fixed.

Despite their success, contrastive models face challenges in maintaining consistency across languages when extended to multilingual settings, particularly in preserving the geometric structure of the shared embedding space. 
We address this limitation through topological alignment.

\subsection{Autoregressive Multimodal Large Language Models}
\label{app:autoregressive-llms}

While our work focuses on contrastive VLMs, we briefly review recent autoregressive multimodal Large Language Models (LLMs) to contextualize our approach within the broader landscape of vision-language understanding. Unlike contrastive models that learn aligned embedding spaces, autoregressive multimodal LLMs generate text conditioned on visual inputs through next-token prediction.

Flamingo~\citep{alayrac2022flamingo} pioneered the frozen LLM approach by introducing cross-attention layers between a pretrained vision encoder and language model, enabling few-shot learning on vision-language tasks. BLIP-2~\citep{li2023blip2} proposed Q-Former, a lightweight module that bridges frozen image encoders and LLMs through a set of learnable query tokens, significantly reducing training costs while achieving strong performance.

LLaVA~\citep{liu2024visual} demonstrated that visual instruction tuning (training on instruction-following data in the visual domain) can produce capable multimodal assistants. The model uses a simple projection layer to connect CLIP visual features with an LLM, showing that architectural simplicity combined with high-quality instruction data can be highly effective. Subsequent versions like LLaVA-1.5~\citep{liu2024improved} and LLaVA-NeXT~\citep{liu2024llavanext} have improved resolution handling and reasoning capabilities.

Commercial models have pushed the boundaries further. GPT-4V~\citep{openai2023gpt4v} demonstrates unprecedented visual understanding and reasoning, though architectural details remain proprietary. Gemini~\citep{team2023gemini} achieves state-of-the-art performance across numerous multimodal benchmarks through native multimodal pretraining rather than connecting separate vision and language models.

The Qwen series has emerged as a particularly strong line of multimodal models. Qwen-VL~\citep{bai2023qwen} introduced a versatile VLM supporting multiple languages and resolutions. Qwen2-VL~\citep{wang2024qwen2vl} significantly improved upon this with enhanced visual reasoning, video understanding, and multilingual OCR capabilities across 29 languages. The latest Qwen2.5-VL~\citep{qwen2024qwen25vl} further advances the architecture with dynamic resolution support and improved instruction following, achieving state-of-the-art performance on various benchmarks while maintaining efficient inference.

Similarly, Google's Gemma family has expanded into multimodal territory. PaliGemma~\citep{beyer2024paligemma} combines a SigLIP vision encoder with Gemma language models for versatile vision-language understanding. Gemma-2~\citep{team2024gemma2} improved the base architecture, leading to enhanced multimodal capabilities when combined with vision encoders. These models demonstrate strong performance while being more accessible than larger commercial offerings.

Other notable open-source alternatives include InternVL~\citep{chen2024internvl}, which scales vision foundation models for generic visual-linguistic tasks, and the Yi-VL series~\citep{young2024yi}, which offers competitive performance with bilingual (Chinese-English) specialization.

These autoregressive models excel at complex reasoning, visual question answering, and generating detailed descriptions. However, they require significant computational resources during inference due to sequential token generation and cannot easily cache embeddings for retrieval tasks. Furthermore, their multilingual capabilities typically depend on the underlying LLM's language coverage, making it challenging to add new languages without extensive retraining.

The fundamental architectural differences between contrastive and autoregressive approaches lead to complementary strengths: contrastive models like CLIP excel at retrieval and classification with efficient inference, while autoregressive models provide superior reasoning and generation capabilities at higher computational cost. Our topology-aware alignment method specifically targets the unique challenges of multilingual contrastive models, where maintaining geometric consistency across languages is crucial for retrieval performance.

\section{PERSISTENT HOMOLOGY}
\label{app:ph}

Topological data analysis (TDA) characterizes the shape of data by extracting topological features that are stable to small perturbations. We assume the observed points are sampled from an unknown manifold embedded in a metric space. Given a finite point cloud \(X=\{x_i\}_{i=1}^N\) with metric \(d\), we construct a nested family of simplicial complexes (e.g., a Vietoris-Rips filtration) indexed by a scale parameter \(\alpha\). Persistent homology computes homology across scales and records when features, such as connected components and loops, are born and die. The resulting multiset of birth-death pairs is the persistence diagram. These summaries provide geometric signals.

\paragraph{Point Clouds and the Vietoris-Rips Filtration.}
Let $X=\{x_i\}_{i=1}^N \subset (\mathcal{X},d)$.
For $\alpha \ge 0$, the \emph{Vietoris-Rips (Rips) complex $\mathrm{VR}_\alpha(X)$} is the abstract simplicial complex whose $k$-simplices are all $(k{+}1)$-tuples $\{x_{i_0},\ldots,x_{i_k}\}$ with pairwise distances $\max\limits_{p,q} d(x_{i_p},x_{i_q}) \le \alpha$.
As $\alpha$ increases, the complexes are nested
\begin{equation}
    \mathrm{VR}_{\alpha_1}(X)\;\subseteq\;\mathrm{VR}_{\alpha_2}(X)\quad\text{for }\alpha_1\le \alpha_2,    
\end{equation}
yielding the Rips filtration $\{\mathrm{VR}_\alpha(X)\}_{\alpha\ge 0}$.

\paragraph{Weighted Graphs and the Rips Filtration.}
For a weighted graph $G = (V,E,w)$ with weight function $\omega:E\to\mathbb{R}_{\ge0}$, 
we define the \emph{Rips complex $\mathrm{VR}_\alpha(G)$} as the abstract simplicial complex whose 
1-skeleton consists of the vertex set $V$ and all edges $(u,v)\in E$ with $w(u,v)\le \alpha$.
Higher-order simplices are then included whenever all their edges are present.
As $\alpha$ increases, the complexes form a nested sequence 
$\mathrm{VR}_{\alpha_1}(G)\subseteq \mathrm{VR}_{\alpha_2}(G)$ for $\alpha_1\le\alpha_2$,
yielding the Rips filtration $\{\mathrm{VR}_\alpha(G)\}_{\alpha\ge0}$ induced by the graph weights.

\paragraph{Persistent Homology and Persistence Diagrams.}
Fix a homological dimension $k\in\{0,1,2,\ldots\}$ and a coefficient field (we use $\mathbb{Z}_2$).
The inclusion maps in the filtration induce homomorphism between homology groups $H_k(\mathrm{VR}_{\alpha_1}) \to H_k(\mathrm{VR}_{\alpha_2})$ for $\alpha_1\le\alpha_2$.
Each topological feature $\eta$ (a $k$-dimensional class) \emph{appears} (is born) at scale $b$ ($H_k(\mathrm{VR}_{b})$) and \emph{disappears} (dies) at scale $d\ge b$ ($H_k(\mathrm{VR}_{d})$).
The multiset of pairs $(b,d)$ is the $k$-dimensional \emph{persistence diagram} $D_k$.
For $k{=}0$, all components are born at $b{=}0$, and deaths record the merger times of components.

\paragraph{Distances Between Persistence Diagrams.}
Let $D_1$ and $D_2$ be persistence diagrams, and let $\Delta=\{(t,t)\,:\,t\in\mathbb{R}\}$ be the diagonal line in $\mathbb{R}^2$.
We compare diagrams by allowing matches to points on $\Delta$.
For $p\in[1,\infty)$, the \emph{$p$-Wasserstein distance} is
\begin{equation}
    W_p(D_1,D_2)
    \;=\;
    \Big[
    \inf_{\gamma}\;\sum_{u\in D_1\cup \Delta}
    (\lVert u-\gamma(u)\rVert_{p})^{\,p}
    \Big]^{\!1/p},
\end{equation}
where $\gamma$ ranges over all bijections between $D_1 \cup \Delta$ and $D_2 \cup \Delta$, and $\lVert \cdot \rVert_p$ denotes $L_p$-norm.
The special case $p{=}\infty$ yields the \emph{bottleneck distance}
\begin{equation}
    W_\infty(D_1,D_2)
    \;=\;
    \inf_{\gamma}\;\sup_{u\in D_1\cup \Delta}\lVert u-\gamma(u)\rVert_\infty.    
\end{equation}
These metrics enjoy well-known stability properties: small perturbations of the input metric (or filtration function) produce small changes in the diagrams~\citep{skraba2020wasserstein}.

\paragraph{Sliced Wasserstein distance (SWD).} SWD approximates the $d$-dimensional Wasserstein distance by projecting the data onto many 1-dimensional lines and averaging the resulting one-dimensional Wasserstein costs. 
This yields a fast $\mathcal{O}(K\,N\log N)$, differentiable, and GPU-friendly objective that is well suited as a training loss. 
We now give the formal definition.

Given two finite point sets $X=\{x_i\}_{i=1}^{N}\subset\mathbb{R}^{n}$ and $Y=\{y_j\}_{j=1}^{N}\subset\mathbb{R}^{n}$ (uniform weights), the sliced $p$-Wasserstein distance compares them by averaging one-dimensional $p$-Wasserstein costs of their projections.
For a unit direction $\theta\in S^{n-1}$, project $s_i=\langle x_i,\theta\rangle$ and $t_j=\langle y_j,\theta\rangle$, and let $s_{(1)}\le\cdots\le s_{(N)}$ and $t_{(1)}\le\cdots\le t_{(N)}$ be the sorted values.
The 1D cost along $\theta$ is
\[
W_p^{\mathrm{1D}}(\theta)
=\Big(\frac{1}{N}\sum_{i=1}^{N}\big|\,s_{(i)}-t_{(i)}\,\big|^{p}\Big)^{\!1/p}.
\]
Averaging over directions yields
\[
\mathrm{SW}_p(X,Y)
=\Bigg(\int_{S^{d-1}}\!\!\big(W_p^{\mathrm{1D}}(\theta)\big)^{p}\,d\sigma(\theta)\Bigg)^{\!1/p}
\]
where $\sigma$ is the uniform measure on $S^{d-1}$.
In practice, we approximate the integral with $K$ directions $\{\theta_k\}_{k=1}^{K}$ sampled uniformly:
\begin{equation}
\label{eq:swd}
    \mathrm{SW}_p^{(K)}(X,Y)
    =\Big(\tfrac{1}{K}\sum_{k=1}^{K}\big(W_p^{\mathrm{1D}}(\theta_k)\big)^{p}\Big)^{\!1/p}
\end{equation}
which can be computed in $\mathcal{O}(K\,N\log N)$ time via sorting per direction.

\section{NOTATIONS}
\label{app:notations}

\begin{table}[t]
\centering
\small
\caption{Notations used in Section~\ref{sec:topological_alignment} (Topological Alignment Loss).}
\label{tab:notation_2_1}
\begin{tabular}{@{}ll@{}}
\toprule
\textbf{Symbol} & \textbf{Description} \\
\midrule
$X$ & English captions (source-language captions). \\
$X^{*}$ & Translated captions in a target language. \\[2pt]

$E_T$ & CLIP text encoder (teacher), encoding $X$. \\
$E_S$ & Multilingual CLIP (MCLIP) text encoder (student), encoding $X^{*}$. \\[2pt]

$\{(I_i, T_i)\}_{i=1}^{N}$ & A minibatch of $N$ image-caption pairs. \\
$T_i^{*}$ & Translated caption of $T_i$. \\[2pt]

$\{E_T(T_i)\}_{i=1}^{N}$ & Teacher text-embedding point cloud from the batch $\{T_i\}_{i=1}^N$. \\
$\{E_S(T_i^{*})\}_{i=1}^{N}$ & Student text-embedding point cloud from the batch $\{T_i^{*}\}_{i=1}^N$. \\[2pt]

$D_T$ & Persistence diagram computed from point cloud $\{E_T(T_i)\}_{i=1}^{N}$. \\
$D_S$ & Persistence diagram computed from point cloud $\{E_S(T_i^{*})\}_{i=1}^{N}$. \\[2pt]

$L_{\mathrm{pw}}$ & Point-wise alignment loss proposed by MCILP. \\
$L_{\mathrm{ta}}$ & Topological alignment loss we proposed. \\
$L_{\mathrm{dm}}$ & Distance matrix loss we proposed. \\[2pt]

$SW^{(K)}_{p}(\cdot,\cdot)$ & Sliced $p$-Wasserstein distance with $K$ projection directions. \\[2pt]

$M_X$ & Pairwise distance matrix of $X=\{x_i\}_{i=1}^{N}$:
$(M_X)_{i,j}=\|x_i-x_j\|_2$. \\
$M_T$ & Pairwise distance matrix computed from $T=\{E_T(T_i)\}_{i=1}^{N}$. \\
$M_S$ & Pairwise distance matrix computed from $S=\{E_S(T_i^{*})\}_{i=1}^{N}$. \\[2pt]

$L_{\mathrm{total}}$ & Total objective: $L_{\mathrm{total}}=\alpha L_{\mathrm{pw}}+\beta L_{\mathrm{ta}}+\gamma L_{\mathrm{dm}}$. \\
$\alpha,\beta,\gamma$ & Loss weights (hyperparameters). \\[2pt]

$D^{(k)}_X$ & $k$-dimensional persistence diagram of point cloud $X$ (stability discussion). \\
$W_p(\cdot,\cdot)$ & $p$-Wasserstein distance between persistence diagrams. \\
$W^{c}_{p}(X,Y)$ & $p$-Wasserstein distance between point clouds $X$ and $Y$ (stability bound). \\
$C_k$ & Stability constant in $W_p(D^{(k)}_X, D^{(k)}_Y)\le C_k W^{c}_{p}(X,Y)$. \\
\bottomrule
\end{tabular}
\end{table}

Table~\ref{tab:notation_2_1} summarizes the notations used in Section~\ref{sec:topological_alignment}.

\section{PROOF OF THEOREM}
\label{app:proof}

\setcounter{theorem}{0}
\begin{theorem}
    Let $0 \le \epsilon \le 1$. Define $G_\epsilon = (V,E,\omega_\epsilon)$ by
    \begin{equation}
        \omega_\epsilon(e) =
        \begin{cases}
            \omega(e), & \text{if } \omega(e) \le \epsilon,\\[4pt]
            1,         & \text{if } \omega(e) > \epsilon.
        \end{cases}
    \end{equation}
    Let
        $m(\epsilon) \coloneqq 
        \#\bigl\{ (0,d) \in D_0^{\mathrm{Rips}}(G) \mid \epsilon < d < \infty \bigr\}$,
    i.e., the number of finite $0$-dimensional persistence points of $G$ 
    whose death times exceed $\epsilon$.  
    Then
    \begin{equation}
        W_p\!\bigl(D_0^{\mathrm{Rips}}(G),\,D_0^{\mathrm{Rips}}(G_\epsilon)\bigr)
        \;\le\; m(\epsilon)^{1/p}\,(1-\epsilon)
    \end{equation}
    and $0 \leq m(\epsilon) \leq N-1$ where $W_p$ denotes $p$-Wasserstein distance.
\end{theorem}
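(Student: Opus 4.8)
The plan is to reduce the statement to the combinatorics of Kruskal's algorithm, using the standard identification of $D_0^{\mathrm{Rips}}$: for a connected weighted graph on $N$ vertices, the finite part of the $0$-dimensional Rips persistence diagram is $\{(0,w)\}$ where $w$ runs, with multiplicity, over the edge weights of any minimum spanning tree, together with one essential class $(0,\infty)$; in particular there are at most $N-1$ finite points, which already gives $0\le m(\epsilon)\le N-1$. The key observation is that the reweighting $\omega\mapsto\omega_\epsilon$ is order-preserving on $\{e:\omega(e)\le\epsilon\}$ and collapses every edge with $\omega(e)>\epsilon$ to the single value $1>\epsilon$ (using $\epsilon\le 1$). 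Hence, running Kruskal on $G$ and on $G_\epsilon$, the two executions coincide while processing edges of weight $\le\epsilon$: they build the same spanning forest of $\mathrm{VR}_\epsilon(G)$ and record the same merge times, namely the $N-1-m(\epsilon)$ MST-edge weights that do not exceed $\epsilon$.

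Next I would treat the remaining phase. On $G$, Kruskal continues over the edges of weight in $(\epsilon,1]$ and adds exactly $m(\epsilon)$ more edges, whose weights $d_1',\dots,d_{m(\epsilon)}'\in(\epsilon,1]$ are precisely the finite death times of $D_0^{\mathrm{Rips}}(G)$ exceeding $\epsilon$. On $G_\epsilon$ all remaining edges have weight $1$, and since $G_\epsilon$ is complete, hence connected, one must add $m(\epsilon)$ weight-$1$ edges to join the $m(\epsilon)+1$ components of the forest; any tie-breaking still yields an MST, so the corresponding death times of $D_0^{\mathrm{Rips}}(G_\epsilon)$ equal $1$ with multiplicity $m(\epsilon)$. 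Therefore the two diagrams agree exactly except that the $m(\epsilon)$ points $(0,d_j')$ of $D_0^{\mathrm{Rips}}(G)$ are replaced by $m(\epsilon)$ copies of $(0,1)$.

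Finally I would estimate $W_p$ by exhibiting one explicit bijection between $D_0^{\mathrm{Rips}}(G)\cup\Delta$ and $D_0^{\mathrm{Rips}}(G_\epsilon)\cup\Delta$: fix the essential class, fix every shared point $(0,d_i)$ with $d_i\le\epsilon$, send each $(0,d_j')$ to one copy of $(0,1)$, and keep $\Delta$ pointwise fixed. Its cost is $\bigl(\sum_{j=1}^{m(\epsilon)}|1-d_j'|^{p}\bigr)^{1/p}=\bigl(\sum_{j=1}^{m(\epsilon)}(1-d_j')^{p}\bigr)^{1/p}$, and since $d_j'>\epsilon$ forces $1-d_j'<1-\epsilon$, this is at most $m(\epsilon)^{1/p}(1-\epsilon)$; as $W_p$ is the infimum over bijections, the bound follows. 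I expect the main obstacle to be the first two paragraphs — carefully arguing that $G$ and $G_\epsilon$ produce the same $H_0$ merge events up to scale $\epsilon$ and that $D_0$ death times are exactly MST-edge weights even with ties or coincident points — while the concluding estimate is routine; one should also retain the mild normalization assumption $M>0$ so that $\omega$ is well defined.
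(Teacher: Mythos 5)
Your proposal is correct and follows essentially the same route as the paper's proof: both observe that the two diagrams share all points with death $\le \epsilon$ while every finite death in $(\epsilon,1]$ of $G$ becomes a death at $1$ in $G_\epsilon$, and then bound $W_p$ via the explicit bijection sending each such $(0,d)$ to $(0,1)$ at cost at most $1-\epsilon$, with $m(\epsilon)\le N-1$ from the count of non-essential components. The only difference is presentational: you identify the $H_0$ death times through Kruskal/MST merge events, whereas the paper argues directly on the $1$-skeleton filtrations, noting they coincide for $\alpha\le\epsilon$ and that the filtration of $G_\epsilon$ is constant on $[\epsilon,1)$, so surviving classes die exactly at $1$.
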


\begin{proof}
Let $\mathcal{F}_G=\{\mathrm{VR}_\alpha(G)\}_{\alpha\ge0}$ and
$\mathcal{F}_{G_\epsilon}=\{\mathrm{VR}_\alpha(G_\epsilon)\}_{\alpha\ge0}$ denote the (graph-level) $1$-skeleton filtrations where
\[
  \mathrm{VR}_\alpha(G)= V \cup \{\,e\in E \mid \omega(e)\le \alpha\,\},\qquad
  \mathrm{VR}_\alpha(G_\epsilon)= V \cup \{\,e\in E \mid \omega_\epsilon(e)\le \alpha\,\}.
\]
Since $0$-dimensional homology is depends only on $0$ and $1$-simplices, it suffices to consider the filtered $1$-skeleton.
For $\alpha\le \epsilon$, we have $\omega_\epsilon(e)=\omega(e)$ whenever $\omega(e)\le \epsilon$, hence
$\mathrm{VR}_\alpha(G)=\mathrm{VR}_\alpha(G_\epsilon)$.
Moreover, since $\omega_\epsilon(e)\in\{\omega(e),1\}$, for every $\alpha$ with $\epsilon<\alpha<1$ 
we have $\mathrm{VR}_\alpha(G_\epsilon)=\mathrm{VR}_\epsilon(G_\epsilon)$, i.e., the filtration of $G_\epsilon$ is constant on $[\epsilon,1)$.
Consequently, in $D_0^{\mathrm{Rips}}(G_\epsilon)$ every class that is still alive at time $\epsilon$ dies precisely at $\alpha=1$ when all remaining edges of weight $1$ are added.

In $0$-dimensional persistence points, all births occur at $0$, and there are $N$ points including a single essential class. 
Thus, points of $D_0^{\mathrm{Rips}}(G)$ with death times $d\le \epsilon$ also appear with the same deaths in $D_0^{\mathrm{Rips}}(G_\epsilon)$, while each point with death $d\in(\epsilon,1)$ in $D_0^{\mathrm{Rips}}(G)$ corresponds to a point with death $1$ in $D_0^{\mathrm{Rips}}(G_\epsilon)$.

Define a bijection $\gamma': D_0^{\mathrm{Rips}}(G)\cup \Delta \to D_0^{\mathrm{Rips}}(G_\epsilon)\cup \Delta$ by
\begin{equation}
  \gamma'(0,d) \;=\;
  \begin{cases}
    (0,d), & d\le \epsilon,\\
    (0,1), & \epsilon< d \le 1,
  \end{cases}
\end{equation}
and map the essential class to the essential class. (No diagonal points are used here, but allowing $\Delta$ keeps the statement standard.)
With the usual $\ell_p$ ground metric on $\mathbb{R}^2$, we have
\begin{equation}
  \| (0,d) - \gamma'(0,d) \|_p \;=\;
  \begin{cases}
    0, & d\le \epsilon,\\
    |1-d|, & \epsilon<d\le 1.
  \end{cases}
\end{equation}
The number $m(\epsilon)$ of pairs with $\epsilon<d\le 1$ is at most $N-1$ (all but the essential component). Therefore,
\begin{equation}
  \sum_{u \in D_0^{\mathrm{Rips}}(G)\cup \Delta} (\|u-\gamma'(u)\|_p)^{\,p}
  \; < \; m(\epsilon)\,(1-\epsilon)^{p},
\end{equation}
and $0 \leq m(\epsilon) \leq N-1$ since $|1-d| < 1-\epsilon$ for every $d\in(\epsilon,1]$.
Taking the infimum over all bijections and the $p$-th root yields
\begin{align}
  W_p\!\bigl(D_0^{\mathrm{Rips}}(G),\,D_0^{\mathrm{Rips}}(G_\epsilon)\bigr)
  \; 
  &< \; \bigl( m(\epsilon)\,(1-\epsilon)^{p} \bigr)^{1/p}
  \; \\
  &=\; m(\epsilon)^{1/p}\,(1-\epsilon),
\end{align}
which proves the claim.
\end{proof}

\section{DATASETS AND EXPERIMENTAL DETAILS}
\label{app:data}

\paragraph{Datasets.}
We use the multilingual caption dataset introduced by~\citep{carlsson2022cross}, publicly available at \url{https://huggingface.co/datasets/M-CLIP/ImageCaptions-7M-Translations}. 
While the corpus provides translations for multiple languages, Korean is not included. 
To incorporate Korean, we augment the corpus by replacing a portion of captions with Korean translations; the replacement ratio and exact sampling procedure are specified below.

\begin{table*}[htbp]
\centering
\begingroup
\setlength{\tabcolsep}{5pt}
\renewcommand{\arraystretch}{1.05}
\caption{Per-language sample counts before/after adding Korean. Before: all languages except Vietnamese had 150{,}000; Vietnamese had 100{,}000; Korean was absent. Totals are preserved.}
\begin{tabular}{lrrr@{\hspace{1.8em}}lrrr}
\toprule
Language & Before & After & $\Delta$ & Language & Before & After & $\Delta$ \\
\midrule
afrikaans           & 150000 & 147000 & $-3000$ & italian      & 150000 & 147000 & $-3000$ \\
albanian            & 150000 & 147000 & $-3000$ & japanese     & 150000 & 147000 & $-3000$ \\
amharic             & 150000 & 147000 & $-3000$ & korean       &      0 & 138000 & $+138000$ \\
arabic              & 150000 & 147000 & $-3000$ & macedonian   & 150000 & 147000 & $-3000$ \\
azerbaijani         & 150000 & 147000 & $-3000$ & malayalam    & 150000 & 147000 & $-3000$ \\
bengali             & 150000 & 147000 & $-3000$ & marathi      & 150000 & 147000 & $-3000$ \\
bosnian             & 150000 & 147000 & $-3000$ & polish       & 150000 & 147000 & $-3000$ \\
bulgarian           & 150000 & 147000 & $-3000$ & portuguese   & 150000 & 147000 & $-3000$ \\
catalan             & 150000 & 147000 & $-3000$ & romanian     & 150000 & 147000 & $-3000$ \\
chinese\_simplified & 150000 & 147000 & $-3000$ & russian      & 150000 & 147000 & $-3000$ \\
chinese\_traditional& 150000 & 147000 & $-3000$ & serbian      & 150000 & 147000 & $-3000$ \\
czech               & 150000 & 147000 & $-3000$ & slovenian    & 150000 & 147000 & $-3000$ \\
danish              & 150000 & 147000 & $-3000$ & spanish      & 150000 & 147000 & $-3000$ \\
dutch               & 150000 & 147000 & $-3000$ & swahili      & 150000 & 147000 & $-3000$ \\
english             & 150000 & 147000 & $-3000$ & swedish      & 150000 & 147000 & $-3000$ \\
estonian            & 150000 & 147000 & $-3000$ & tagalog      & 150000 & 147000 & $-3000$ \\
french              & 150000 & 147000 & $-3000$ & telugu       & 150000 & 147000 & $-3000$ \\
german              & 150000 & 147000 & $-3000$ & turkish      & 150000 & 147000 & $-3000$ \\
greek               & 150000 & 147000 & $-3000$ & turkmen      & 150000 & 147000 & $-3000$ \\
hindi               & 150000 & 147000 & $-3000$ & ukrainian    & 150000 & 147000 & $-3000$ \\
hungarian           & 150000 & 147000 & $-3000$ & uzbek        & 150000 & 147000 & $-3000$ \\
icelandic           & 150000 & 147000 & $-3000$ & uyghur       & 150000 & 147000 & $-3000$ \\
indonesian          & 150000 & 147000 & $-3000$ & vietnamese   & 100000 & 100000 & $0$     \\
\midrule
\multicolumn{8}{r}{\textbf{Total}\quad Before: \textbf{7000000}\quad After: \textbf{7000000}\quad $\Delta$: \textbf{0}}\\
\bottomrule
\end{tabular}
\label{tab:lang_counts_before_after}
\endgroup
\end{table*}

\paragraph{Korean Augmentation.}

In the original corpus, Korean was absent; 46 languages had 150{,}000 captions each and Vietnamese had 100{,}000, totaling 7M samples. 
We added Korean while preserving the per-language ratios and the total size by uniformly reallocating 3{,}000 captions from each non-Vietnamese language to Korean. 
Specifically, for every language except Vietnamese (fixed at 100{,}000), we randomly selected 3{,}000 captions and replaced them with Korean translations. 
This results in 147{,}000 samples per non-Vietnamese language (down from 150{,}000) and 138{,}000 Korean samples in total ($46\times3{,}000$). 
Table~\ref{tab:lang_counts_before_after} summarizes the per-language counts. 

Korean translations were generated using the OpenAI API with a temperature setting of 0.0 to ensure deterministic and consistent translations. To handle the large-scale translation task efficiently, we implemented a batch processing pipeline with checkpoint mechanisms. The translation system processed captions in batches of 1,000 items, with automatic checkpointing every 5,000 translations to enable recovery from potential interruptions. Each translation request included explicit instructions to return only the translated text without additional formatting or explanations. Failed translation attempts were handled with exponential backoff retry logic (up to 3 attempts) to ensure robustness against transient API failures.

\paragraph{Embedding Subset.}
Although the full dataset contains approximately 7M samples, we rely on the 2M precomputed text embeddings released at \href{https://huggingface.co/datasets/M-CLIP/ImageCaptions-7M-Embeddings}{\texttt{ImageCaptions-7M-Embeddings}}. 
We use this subset to train both MCLIP and ToMCLIP and verify that it is sufficient to reproduce the MCLIP performance reported in~\citep{carlsson2022cross}.
To evaluate the model under a low-resource condition, we further subsampled 1\% of the 2M samples and trained MCLIP and ToMCLIP using this reduced training set. 
This setup simulates scenarios where access to multilingual annotated data is severely limited.

\paragraph{Models.}
For multilingual text encoding, we adopt XLM-RoBERTa~\citep{conneau2019unsupervised}.
We use the CLIP (ViT-B/32) image encoder~\citep{radford2021learning}.
When comparing MCLIP and ToMCLIP, the backbone architecture, optimizer, and learning-rate schedule are identical unless otherwise noted.
We set the batch size to $256$, following MCLIP~\citep{carlsson2022cross}. 
ToMCLIP($L_{\text{dm}}$), ToMCLIP($L_{\text{ta}}$), and ToMCLIP denote models trained with the proposed total loss \(L_{\text{total}}\) using coefficients \((\alpha,\beta,\gamma)=(1,0.01,0)\), \((1,0,0.01)\), and \((1,0.01,0.01)\), respectively.
To construct a sparse graph, let $DM$ denote the pairwise distance matrix; we set $\epsilon = \text{mean}(DM) - 0.5*\text{std}(DM)$, computed separately for each point cloud.
For the sliced Wasserstein distance, we use $p=2$  and average over $50$ random projection directions.

\paragraph{Training and Evaluation.}
We train under two data regimes: full-data (all available subset entries) and a 1\% low-resource setting.
We report zero-shot CIFAR-100 classification across 13 languages using top-1/5/10.
All preprocessing, tokenization settings, batch sizes, learning rates, and early stopping are the same as MCLIP~\citep{carlsson2022cross}, except for the loss function, which includes our topology-alignment objective.

\section{PROMPTS OF MULTILINGUAL LANGUAGE FOR THE EVALUATION OF ZERO-SHOT CLASSIFICATION ON THE CIFAR-100}
\label{app:prompts}

To perform zero-shot classification on the CIFAR-100 dataset, we construct language-specific text prompts to match the expected format of each language. These prompts are used to generate class-specific textual descriptions, which are then embedded using the multilingual text encoder. The general template follows the format of ``a photo of a \{\}'' in English, where the placeholder is replaced by the class name.
Table~\ref{tab:prompt_templates} summarizes the prompt templates used for each language in our evaluation.

\begin{table}[!ht]
\centering
\caption{Prompt templates used for each language in the zero-shot classification task. The placeholder \{\} is replaced with the class name.}
\label{tab:prompt_templates}
\begin{tabular}{ll}
\toprule
\textbf{Language (ISO)} & \textbf{Prompt Template} \\
\midrule
English (En)    & \texttt{a photo of a \{\}} \\
French (Fr)     & \texttt{une photo d\'un(e) \{\}} \\
Spanish (Es)    & \texttt{una foto de un(a) \{\}} \\
German (De)     & \texttt{ein Foto von einem/einer \{\}} \\
Italian (It)    & \texttt{una foto di un(a) \{\}} \\
Russian (Ru)    & \ru{фото \{\}} \\
Polish (Pl)     & \pltt{zdjęcie \{\}} \\
Turkish (Tr)    & \texttt{\{\} foto\u{g}raf\i} \\
Danish (Da)     & \texttt{et billede af en \{\}} \\
Japanese (Ja)   & \jatt{\{\}の写真} \\
Chinese (Zh)    & \zhtt{一张\{\}的照片} \\
Korean (Ko)     & \kott{\{\}가\;있는\;사진} \\
Vietnamese (Vi) & \vi{một bức ảnh về \{\}} \\
\bottomrule
\end{tabular}
\end{table}

\section{ADDITIONAL RESULTS}
\label{app:results}

\subsection{Evaluation on CIFAR-100}
\label{app:top-15_results}

\begin{table*}[!ht]
\centering
\caption{Top-1 accuracy (\%) of zero-shot classification on CIFAR-100 across 13 languages (Full vs. Low).}
\label{tab:cifar100_lang_top1}
\resizebox{\textwidth}{!}{%
\begin{tabular}{llcccccccccccccc}
\toprule
& & \multicolumn{13}{c}{Languages (13)} & \multirow{2}{*}{\textit{Avg}} \\
\cmidrule(lr){3-15}
Setting & Model & En & Fr & Es & De & It & Ru & Pl & Tr & Da & Ja & Zh & Ko & Vi \\
\midrule
 & CLIP & \textbf{60.67} & 40.11 & 37.49 & 36.06 & 26.93 & 1.06 & 10.71 & 9.54 & 17.87 & 12.40 & 5.21 & 2.21 & 3.49 & 20.29 \\
 & MCLIP & 58.86 & 49.14 & 51.13 & 51.23 & 51.13 & 49.83 & \textbf{51.40} & \textbf{51.24} & 55.13 & 33.01 & \textbf{54.70} & 51.16 & 51.35 & 50.72 \\
\textbf{Full data} & ToMCLIP($L_{\text{dm}}$) & 57.79 & 46.19 & 50.39 & \textbf{56.13} & 50.39 & 48.62 & 50.29 & 50.99 & 56.62 & \textbf{33.85} & 52.35 & 52.28 & 51.03 & 50.53 \\
\textbf{(2M smaples)} & ToMCLIP($L_{\text{ta}}$) & 58.10 & 48.67 & 48.54 & 52.42 & 51.44 & \textbf{52.67} & 50.74 & 50.57 & 57.09 & 32.86 & 51.90 & 51.37 & \textbf{53.15} & 50.73 \\
 & ToMCLIP & 58.93 & \textbf{50.76} & \textbf{52.67} & 54.27 & \textbf{52.68} & 50.63 & 50.04 & 51.21 & \textbf{57.50} & 31.33 & 52.97 & \textbf{52.41} & 51.72 & \textbf{51.32} \\
\midrule
\multirow{5}{*}{\textbf{Low}} & CLIP & \textbf{60.67} & \textbf{40.11} & \textbf{37.49} & \textbf{36.06} & 26.93 & 1.06 & 10.71 & 9.54 & 17.87 & 12.40 & 5.21 & 2.21 & 3.49 & 20.29 \\
& MCLIP & 35.70 & 32.40 & 29.64 & 31.20 & 28.19 & 32.21 & 27.25 & 25.05 & 33.88 & 24.41 & 33.63 & 30.38 & 28.77 & 30.21 \\
\textbf{Low resource} & ToMCLIP($L_{\text{dm}}$) & 37.84 & 33.12 & 30.32 & 31.13 & 29.82 & 32.70 & 28.87 & 25.16 & 35.24 & 25.91 & 34.32 & 31.27 & 28.82 & 31.12 \\
\textbf{(1\% subset)} & ToMCLIP($L_{\text{ta}}$) & 37.79 & 31.01 & 29.75 & 31.25 & 28.82 & 32.07 & 28.18 & 24.43 & 34.49 & 23.87 & 32.79 & 30.75 & \textbf{30.67} & 30.45 \\
& ToMCLIP & 37.64 & 34.08 & 31.12 & 31.09 & \textbf{31.28} & \textbf{34.08} & \textbf{30.20} & \textbf{25.75} & \textbf{36.11} & \textbf{26.65} & \textbf{35.18} & \textbf{31.79} & 29.90 & \textbf{31.91} \\
\bottomrule
\end{tabular}
}
\end{table*}
\begin{table*}[!ht]
\centering
\caption{Top-5 accuracy (\%) of zero-shot classification on CIFAR-100 across 13 languages (Full vs. Low).}
\label{tab:cifar100_lang_top5}
\resizebox{\textwidth}{!}{%
\begin{tabular}{llcccccccccccccc}
\toprule
& & \multicolumn{13}{c}{Languages (13)} & \multirow{2}{*}{\textit{Avg}} \\
\cmidrule(lr){3-15}
Setting & Model & En & Fr & Es & De & It & Ru & Pl & Tr & Da & Ja & Zh & Ko & Vi \\
\midrule
\multirow{5}{*}{\textbf{Full}} & CLIP & 85.26 & 58.75 & 56.94 & 55.17 & 42.02 & 6.49 & 16.71 & 17.56 & 27.47 & 25.33 & 14.26 & 6.74 & 9.44 & 32.47 \\
 & MCLIP & \textbf{85.38} & 77.07 & 78.25 & 77.13 & 79.41 & 79.06 & \textbf{76.51} & 78.06 & 79.98 & 46.85 & \textbf{81.39} & 77.86 & 77.45 & 76.49 \\
\textbf{Full data} & ToMCLIP($L_{\text{dm}}$) & 84.23 & 73.35 & 73.30 & \textbf{82.06} & 77.03 & 76.31 & 74.19 & 78.61 & 79.84 & 49.05 & 79.75 & 79.40 & 78.85 & 75.84 \\
\textbf{(2M smaples)} & ToMCLIP($L_{\text{ta}}$) & 84.22 & 75.25 & 74.00 & 79.58 & 79.96 & \textbf{80.76} & 76.09 & \textbf{79.58} & 80.80 & \textbf{50.10} & 81.12 & 79.28 & \textbf{81.83} & 77.12 \\
 & ToMCLIP & 84.78 & \textbf{78.87} & \textbf{79.11} & 80.97 & \textbf{80.09} & 78.39 & 74.66 & 78.89 & \textbf{81.27} & 49.58 & 80.38 & \textbf{79.79} & 80.16 & \textbf{77.46} \\
\midrule
\multirow{5}{*}{\textbf{Low}} & CLIP & \textbf{85.26} & \textbf{58.75} & \textbf{56.94} & 55.17 & 42.02 & 6.49 & 16.71 & 17.56 & 27.47 & 25.33 & 14.26 & 6.74 & 9.44 & 32.47 \\
& MCLIP & 67.99 & 57.26 & 52.52 & \textbf{60.26} & 50.52 & 57.82 & 52.05 & 48.35 & 61.92 & 49.45 & 67.07 & 54.48 & 57.00 & 56.67 \\
\textbf{Low resource} & ToMCLIP($L_{\text{dm}}$) & 67.70 & 58.15 & 53.18 & 59.20 & 51.96 & 56.97 & 51.52 & 46.31 & 62.34 & 48.56 & 65.88 & 56.48 & 55.85 & 56.47 \\
\textbf{(1\% subset)} & ToMCLIP($L_{\text{ta}}$) & 68.39 & 57.19 & 52.97 & 59.18 & 51.44 & 58.32 & 53.15 & 48.51 & 61.70 & 50.08 & 65.50 & 56.15 & \textbf{60.28} & 57.14 \\
& ToMCLIP & 68.75 & 58.42 & 54.09 & 60.12 & \textbf{53.73} & \textbf{59.50} & \textbf{54.07} & \textbf{49.92} & \textbf{63.29} & \textbf{51.29} & \textbf{67.36} & \textbf{56.62} & 58.74 & \textbf{58.15} \\
\bottomrule
\end{tabular}
}
\end{table*}

In this section, we report Top-1 and Top-5 performance on CIFAR-100 under both the full-resource and low-resource settings, where the results for the low-resource setting are averaged over three independent runs. 
As shown in Tables~\ref{tab:cifar100_lang_top1} and~\ref{tab:cifar100_lang_top5}, ToMCLIP outperforms MCLIP in zero-shot classification on CIFAR-100 across 13 languages. 
These results confirm that topology-aware alignment enhances cross-lingual consistency and robustness.

\subsection{Ablation Study on Batch Size}
\label{app:batch_size}

\begin{figure*}[!ht]
  \centering
  \includegraphics[width=0.65\linewidth]{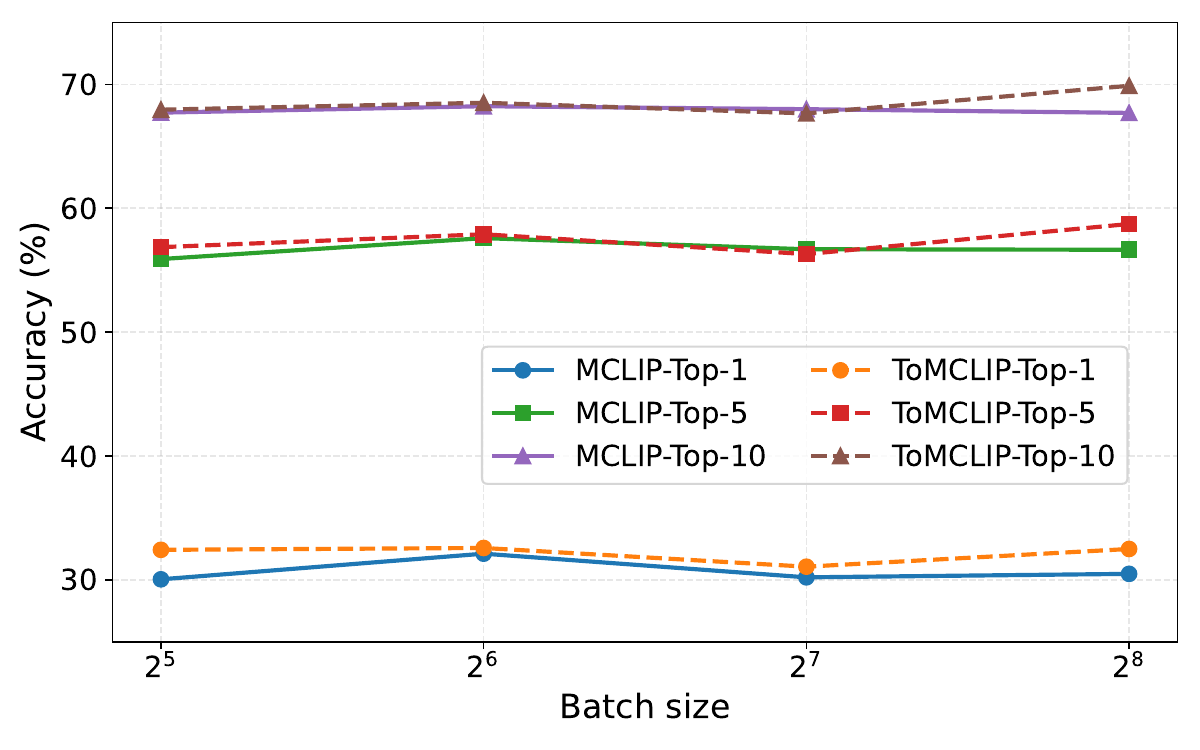} 
  \caption{Ablation study on batch size in the low-resource setting.}
  \label{fig:batchsize_ablation}
\end{figure*}

We also investigate the effect of batch size in the low-resource setting. 
In our framework, the batch size corresponds to the number of sampled points considered when constructing persistence diagrams in the shared embedding space. 
Hence, a larger batch size allows for capturing more refined topological features and yields better approximations of the underlying geometry. 
However, increasing the batch size raises computational complexity, making it crucial to balance accuracy and efficiency. 
As shown in Figure~\ref{fig:batchsize_ablation}, performance improves with larger batches, 
and we therefore adopt a batch size of $256$ as the default in the experiments. 
We note that with batch sizes smaller than $128$, 
the number of sampled points is insufficient to approximate the underlying data manifold, 
leading to limited improvements. 
By contrast, a batch size of $256$ provides enough samples to extract topological information more effectively. 
Further exploration with $512$ or larger batch sizes may reveal 
whether additional gains are possible, which we leave for future work. 
In addition, future work will also explore approximation techniques to further reduce computational cost while maintaining the benefits of large batch sizes.

\subsection{Ablation Study on Loss Coefficients}
\label{app:loss_coeffi}

In Table~\ref{tab:loss_coeffi}, we present the ablation study on the loss coefficients under the low-resource setting. 
We observe that extremely large coefficients (e.g., $\beta=0.1$ or $\gamma=0.1$) severely degrade performance across all metrics, 
while small to moderate values (e.g., $\beta=0.01, 0.001$ or $\gamma=0.01, 0.001$) provide stable performance over the baseline. 
Among the tested configurations, $\beta=0.01$ and $\gamma=0.01$ achieve the highest scores for both Top-1 (32.49\%) and Top-5 (58.73\%), 
as well as the best Top-10 accuracy (69.89\%). 
Therefore, we adopt $\beta=0.01$ and $\gamma=0.01$ as the default setting for the all experiments.

\begin{table}[htbp]
\centering
\caption{Ablation results on loss coefficients. The experiments are conducted on \textbf{Low} resource setting.}
\label{tab:loss_coeffi}
\resizebox{0.8\textwidth}{!}{%
\begin{tabular}{lcccccccccccc}
\toprule
 & \multicolumn{4}{c}{Top-1 (\%)} & \multicolumn{4}{c}{Top-5 (\%)} & \multicolumn{4}{c}{Top-10 (\%)} \\
\cmidrule(lr){2-5} \cmidrule(lr){6-9} \cmidrule(lr){10-13}
\diagbox{$\beta$}{$\gamma$} & 0.0 & 0.001 & 0.01 & 0.1 & 0.0 & 0.001 & 0.01 & 0.1 & 0.0 & 0.001 & 0.01 & 0.1 \\
\midrule
0.0 & 30.48 & 29.89 & 31.20 & 1.00 & 56.65 & 55.86 & 56.43 & 5.07 & 67.70 & 67.54 & 67.65 & 9.97 \\
0.001 & 30.38 & 29.40 & 30.74 & 1.10 & 56.26 & 55.57 & 56.23 & 4.80 & 67.54 & 66.80 & 67.42 & 9.69 \\
0.01 & 30.16 & 29.67 & \textbf{32.49} & 1.01 & 56.87 & 55.99 & \textbf{58.73} & 5.28 & 68.53 & 67.44 & \textbf{69.89} & 10.23 \\
0.1 & 1.14 & 1.19 & 1.31 & 1.00 & 5.21 & 5.01 & 6.60 & 5.11 & 10.49 & 10.15 & 13.03 & 10.10 \\
\bottomrule
\end{tabular}
}%
\end{table}

\subsection{Ablation on 1-Dimensional Homology}
\label{app:onedim}

Our model uses only $0$-dimensional homology ($H_0$) to extract topological features, since the birth times of $1$-dimensional homology ($H_1$) features largely overlap with the pairwise distance MSE ($L_{\mathrm{dm}}$).
To verify this, we empirically tested whether incorporating $H_1$ improves training.
Specifically, we defined
\begin{equation}
    L_{\mathrm{ta}}
    =\frac{1}{2}\,\mathrm{SW}_p^{(K)}\!\big(D_T^{(0)}, D_S^{(0)}\big)
    +\frac{1}{2}\,\mathrm{SW}_p^{(K)}\!\big(D_T^{(1)}, D_S^{(1)}\big),
\end{equation}
and set $(\alpha,\beta,\gamma)=(1,\,0.01,\,0.01)$.
We conducted experiments on \textbf{Low} setting.
Adding $H_1$ lowers the overall average (69.89 $\rightarrow$ 69.03), suggesting that $H_1$ provides limited additional benefit in our setting.

\begin{table*}[!ht]
\centering
\caption{Ablation on 1-dimensional homology. ToMCLIP(dim1) denotes the model trained with $L_{\mathrm{ta}}$ computed on both H$_0$ and H$_1$.}

\label{tab:onedim}
\resizebox{\textwidth}{!}{%
\begin{tabular}{lcccccccccccccc}
\toprule
& \multicolumn{13}{c}{Languages (13)} & \multirow{2}{*}{\textit{Avg}} \\
\cmidrule(lr){2-14}
Model & En & Fr & Es & De & It & Ru & Pl & Tr & Da & Ja & Zh & Ko & Vi \\
\midrule
MCLIP & 79.25 & 67.60 & 62.21 & 70.44 & 60.32 & 69.41 & 64.64 & 57.87 & 72.95 & 62.09 & 77.32 & 64.72 & 71.24 & 67.70 \\
ToMCLIP & 81.06 & \textbf{70.66} & \textbf{64.25} & 72.70 & \textbf{63.54} & \textbf{71.88} & \textbf{67.04} & \textbf{60.87} & \textbf{74.77} & \textbf{64.21} & \textbf{78.33} & 67.23 & 71.99 & \textbf{69.89} \\
ToMCLIP(dim1) &  \textbf{82.27} & 69.73 & 64.02 &  \textbf{73.72} & 61.16 & 71.87 & 64.53 & 60.11 & 73.58 & 60.08 & 78.07 & 66.44 & 71.81 & 69.03 \\
\bottomrule
\end{tabular}
}
\end{table*}

\subsection{Effect of the Approximation Threshold for the Persistence Diagram}
\label{app:std_scale}
We control graph sparsity with a distance threshold
$\epsilon = \mu - \lambda\sigma$, where $\mu$ and $\sigma$ denote the mean and standard deviation of pairwise distances, respectively; we keep edges with
distance $\le \epsilon$.
We conducted an ablation study on $\lambda \in \{1.5, 1, 0.5, 0\}$ in the low-resource setting.
As $\lambda$ increases, $\epsilon$ decreases and the graph becomes sparser, which
reduces memory/time but may remove informative structure.
Table~\ref{tab:std_scale} summarizes the results. Increasing $\lambda$ makes the
graph sparser and speeds up persistence diagram computation (0.075\,s
$\rightarrow$ 0.011\,s from $\lambda{=}0$ to $1.5$), but excessive sparsity hurts accuracy. 
As the graph becomes denser (smaller $\lambda$), the persistence diagram approximation approaches the exact persistence diagram and accuracy does not decrease. 
In practice, $\lambda=0.5$ already makes the approximation error negligible.
Choosing $\lambda<0.5$ increases computation without yielding further gains, whereas $\lambda>0.5$ introduces additional sparsity, incurs approximation error, and lowers accuracy. 
Consistent with the analysis in Section~\ref{sec:conn_spars}, the persistence diagram approximation error near $\lambda=0.5$ is negligible, which supports adopting \textbf{$\lambda=0.5$} as the default balance between performance and computational cost.

\begin{table}[!ht]
\caption{Top-10 accuracy (\%) of zero-shot classification on CIFAR-100 across 13 languages.}
\centering
\resizebox{\textwidth}{!}{%
\begin{tabular}{l|rrrrrrrrrrrrrrr}
\toprule
$\lambda$ & Time(s) & En & Fr & Es & De & It & Ru & Pl & Tr & Da & Ja & Zh & Ko & Vi & Avg \\
\midrule
1.5 & \textbf{0.01119} & 78.73 & 66.95 & 62.78 & 69.67 & 60.10 & 68.71 & 62.82 & 57.38 & 70.80 & 57.57 & 75.08 & 65.76 & 71.67 & 66.77 \\
1 & 0.02434 & 80.50 & 68.89 & 63.22 & 71.75 & 61.96 & 69.80 & 63.59 & 60.82 & 73.06 & 59.97 & 77.87 & 68.12 & 71.82 & 68.57 \\
0.5 & 0.04608 & \textbf{81.06} & \textbf{70.66} & \textbf{64.25} & \textbf{72.70} & \textbf{63.54} & \textbf{71.88} & \textbf{67.04} & \textbf{60.87} & \textbf{74.77} & \textbf{64.21} & \textbf{78.33} & 67.23 & \textbf{71.99} & \textbf{69.89} \\
0 & 0.07519 & 80.42 & 70.13 & 63.32 & 70.21 & 61.91 & 71.67 & 65.51 & 59.19 & 73.78 & 61.91 & 78.21 & \textbf{68.20} & 71.84 & 68.95 \\
\bottomrule
\end{tabular}}
\label{tab:std_scale}
\end{table}

\subsection{Effect of the Number of Projections for SWD}
\label{app:swd_K}

We approximate the SWD in Eq.~\ref{eq:swd} via Monte Carlo sampling with $K$ random projection directions.
The computational complexity is $\mathcal{O}(K\,N\log N)$; since runtime grows approximately linearly with $K$, we ablate $K$ to select a balanced value.
Table~\ref{tab:swdK_cifar100_k10} reports the ablation in the low-resource setting.
Empirically, increasing $K$ improves accuracy up to a point: the average Top-10 accuracy rises from 66.99 ($K{=}5$) and 66.81 ($K{=}10$) to 68.16 ($K{=}30$), peaking at 69.89 ($K{=}50$).
For $K{=}50$ and $K{=}100$, performance is similar while the computational cost roughly doubles; hence we adopt \textbf{$K{=}50$} as the default.

\begin{table}[!ht]
\caption{Top-10 accuracy (\%) of zero-shot classification on CIFAR-100 across 13 languages.}
\centering
\resizebox{\textwidth}{!}{%
\begin{tabular}{l|rrrrrrrrrrrrrr}
\toprule
$K$ & En & Fr & Es & De & It & Ru & Pl & Tr & Da & Ja & Zh & Ko & Vi & Avg \\
\midrule
5 & 78.04 & 66.92 & 60.42 & 69.17 & 59.95 & 71.64 & 63.43 & 55.38 & 72.69 & 58.69 & 77.32 & 65.25 & 71.93 & 66.99 \\
10 & 78.03 & 66.08 & 61.88 & 70.07 & 60.06 & 69.05 & 63.45 & 57.91 & 71.41 & 58.00 & 76.50 & 64.61 & 71.51 & 66.81 \\
30 & 79.47 & 67.18 & 63.51 & 70.57 & 60.48 & 69.53 & 66.41 & 58.76 & 72.10 & 59.19 & 77.84 & \textbf{67.92} & 73.13 & 68.16 \\
50 & \textbf{81.06} & 70.66 & 64.25 & \textbf{72.70} & 63.54 & \textbf{71.88} & \textbf{67.04} & \textbf{60.87} & \textbf{74.77} & \textbf{64.21} & \textbf{78.33} & 67.23 & 71.99 & \textbf{69.89} \\
100 & 79.58 & \textbf{71.81} & \textbf{66.18} & 71.97 & \textbf{64.61} & 70.92 & 64.30 & 57.44 & 73.07 & 58.33 & 78.17 & 66.63 & \textbf{73.30} & 68.95 \\
\bottomrule
\end{tabular}}
\label{tab:swdK_cifar100_k10}
\end{table}

\subsection{Evaluation on xFlickr\&CO}
\label{app:retrieval_xflickrco}

\begin{table*}[!ht]
\centering
\caption{Multilingual retrieval on xFlickr\&CO. R@1 retrieval accuracy (\%) across languages.
\textcolor{red}{$\blacktriangle$} and \textcolor{blue}{$\blacktriangledown$} mark improvements/decreases over MCLIP
for the same setting and direction; here only the icons are shown.}
\label{tab:xflickrco_top1}
\resizebox{0.9\textwidth}{!}{%
\small
\begin{tabular}{lllccccccccc}
\toprule
\multirow{3}{*}{Setting} & \multirow{3}{*}{Direction} &  \multirow{3}{*}{Model} & \multicolumn{8}{c}{Languages} & \multirow{2}{*}{\textit{Avg}} \\
\cmidrule(lr){4-11}
 & &  & En & Es & De & Id & Ru & Tr & Ja & Zh &  \\
\midrule
 & \multirow{5}{*}{\textbf{IR}} & CLIP & 54.90 & 22.05 & 11.00 & 4.15 & 0.35 & 1.90 & 1.95 & 0.35 & 12.08 \\
 & & MCLIP & 55.00 & 54.65 & 48.45 & 48.95 & 56.65 & 53.35 & 35.45 & 48.50 & 50.12 \\
 & & ToMCLIP($L_{\text{dm}}$) & 55.10\,\uptriangle & 55.10\,\uptriangle & 48.65\,\uptriangle & 49.50\,\uptriangle & 56.95\,\uptriangle & \textbf{54.35}\,\uptriangle & \textbf{38.20}\,\uptriangle & \textbf{48.95}\,\uptriangle & \textbf{50.85}\,\uptriangle \\
 & & ToMCLIP($L_{\text{ta}}$) & 55.40\,\uptriangle & 54.95\,\uptriangle & \textbf{49.15}\,\uptriangle & 49.15\,\uptriangle & \textbf{57.35}\,\uptriangle & 53.50\,\uptriangle & \textbf{38.20}\,\uptriangle & 48.65\,\uptriangle & 50.79\,\uptriangle \\
 \textbf{Full data} & & ToMCLIP & \textbf{55.60}\,\uptriangle & \textbf{55.15}\,\uptriangle & 48.40\,\down & \textbf{50.00}\,\uptriangle & 56.70\,\uptriangle & 53.70\,\uptriangle & 38.00\,\uptriangle & 48.55\,\uptriangle & 50.76\,\uptriangle \\
\cmidrule(lr){2-12}
\textbf{(2M smaples)} & \multirow{5}{*}{\textbf{TR}} & CLIP & 58.55 & 29.10 & 17.15 & 10.80 & 0.80 & 4.25 & 5.25 & 2.15 & 16.01 \\
 & & MCLIP & 58.60 & 58.90 & 48.95 & 51.45 & 61.15 & 55.05 & 39.55 & 53.35 & 53.38 \\
 & & ToMCLIP($L_{\text{dm}}$) & 59.20\,\uptriangle & 59.35\,\uptriangle & 49.25\,\uptriangle & 51.80\,\uptriangle & 61.05\,\down & \textbf{56.50}\,\uptriangle & \textbf{40.75}\,\uptriangle & \textbf{54.15}\,\uptriangle & 54.01\,\uptriangle \\
 & & ToMCLIP($L_{\text{ta}}$) & 58.50\,\down & \textbf{60.15}\,\uptriangle & \textbf{49.70}\,\uptriangle & 51.70\,\uptriangle & 60.90\,\down & 55.20\,\uptriangle & 40.70\,\uptriangle & 53.80\,\uptriangle & 53.83\,\uptriangle \\
 & & ToMCLIP & \textbf{59.55}\,\uptriangle & 59.25\,\uptriangle & 49.55\,\uptriangle & \textbf{53.70}\,\uptriangle & \textbf{61.55}\,\uptriangle & 54.85\,\down & 40.70\,\uptriangle & 53.40\,\uptriangle & \textbf{54.07}\,\uptriangle \\
\midrule
 & \multirow{5}{*}{\textbf{IR}} & CLIP & \textbf{54.90} & 22.05 & 11.00 & 4.15 & 0.35 & 1.90 & 1.95 & 0.35 & 12.08 \\
 &  & MCLIP & 37.05 & 35.72 & 30.08 & 36.00 & 38.30 & 30.17 & 27.87 & 32.88 & 33.51 \\
 &  & \textsc{ToMCLIP}(\(L_{dm}\)) & 37.85\,\uptriangle & \textbf{37.27}\,\uptriangle & 30.65\,\uptriangle & \textbf{37.40}\,\uptriangle & \textbf{39.98}\,\uptriangle & 31.05\,\uptriangle & 28.17\,\uptriangle & \textbf{33.53}\,\uptriangle & 34.49\,\uptriangle \\
 &  & \textsc{ToMCLIP}(\(L_{ta}\)) & 38.00\,\uptriangle & 36.65\,\uptriangle & \textbf{31.23}\,\uptriangle & 36.55\,\uptriangle & 39.60\,\uptriangle & \textbf{31.27}\,\uptriangle & \textbf{29.17}\,\uptriangle & 33.50\,\uptriangle & 34.50\,\uptriangle \\
 \textbf{Low resource} &  & \textsc{ToMCLIP} & 37.10\,\uptriangle & 37.23\,\uptriangle & 30.55\,\uptriangle & 36.37\,\uptriangle & 38.85\,\uptriangle & 30.15\,\down & 28.48\,\uptriangle & 33.52\,\uptriangle & 34.03\,\uptriangle \\
\cmidrule(lr){2-12}
 \textbf{(1\% subset)} & \multirow{5}{*}{\textbf{TR}} & CLIP & \textbf{58.55} & 29.10 & 17.15 & 10.80 & 0.80 & 4.25 & 5.25 & 2.15 & 16.01 \\
 &  & MCLIP & 42.15 & 42.83 & 35.17 & 41.85 & 44.38 & 36.57 & 33.10 & 39.07 & 39.39 \\
 &  & \textsc{ToMCLIP}(\(L_{dm}\)) & 42.55\,\uptriangle & 42.48\,\down & \textbf{35.93}\,\uptriangle & 42.33\,\uptriangle & 45.72\,\uptriangle & 36.32\,\down & 32.92\,\down & 39.47\,\uptriangle & 39.71\,\uptriangle \\
 &  & \textsc{ToMCLIP}(\(L_{ta}\)) & 43.77\,\uptriangle & \textbf{43.37}\,\uptriangle & 35.90\,\uptriangle & \textbf{43.13}\,\uptriangle & \textbf{46.03}\,\uptriangle & \textbf{36.70}\,\uptriangle & \textbf{33.27}\,\uptriangle & \textbf{40.17}\,\uptriangle & 40.29\,\uptriangle \\
 &  & \textsc{ToMCLIP} & 42.92\,\uptriangle & 43.07\,\uptriangle & 35.02\,\down & 41.98\,\uptriangle & 45.17\,\uptriangle & 36.20\,\down & 32.65\,\down & 39.05\,\down & 39.51\,\uptriangle \\
\bottomrule
\end{tabular}}
\end{table*}

Table~\ref{tab:xflickrco_top1} reports multilingual retrieval performance (R@1) on xFlickr\&CO under both the  full-resource and low-resource settings,
where the results for the low-resource setting are averaged over three independent runs. 
In the full-resource setting, ToMCLIP consistently outperforms MCLIP across most languages 
for both IR and TR directions, achieving higher average R@1 (50.76\% on IR and 54.07\% on TR). 
In the low-resource setting, ToMCLIP still provides higher average R@1 (34.03\% on IR and 39.51\% on TR).

\subsection{Training Time and Evaluation Time}
\label{app:training_time}

To assess computational efficiency, we compared the average training time per epoch between the two models, MCLIP and ToMCLIP. 
We trained with one NVIDIA A100 (80 GB) on a single-node server (2× AMD EPYC 7513).
The baseline MCLIP required approximately 285 minutes per epoch, whereas the proposed ToMCLIP, which incorporates the additional topology loss and distance matrix alignment, required 357 minutes per epoch.
Although ToMCLIP increases the training cost relative to MCLIP, the additional overhead remains manageable considering the substantial improvement in cross-lingual alignment performance. 
This is made possible by our persistence diagram approximation strategy, which employs MST-based computation and graph sparsification to avoid the exponential complexity of constructing full Rips complexes.

Importantly, evaluation time remains unchanged between MCLIP and ToMCLIP. 
Since our method only modifies the training objective and does not alter the model architecture, no additional computation is introduced during inference. 
Thus, both models share identical evaluation speed and memory requirements, ensuring that the performance gains of ToMCLIP come at no cost during deployment.

\section{ADDITIONAL RESULTS WITH VIT-B/16 PLUS CLIP IMAGE ENCODER}
\label{app:vit+}

We replace the CLIP image encoder with ViT-B/16+~\citep{cherti2023reproducible}, which is trained on the LAION-400M dataset~\citep{schuhmann2021laion}. 
The multilingual text encoder remains XLM-RoBERTa~\citep{conneau2019unsupervised}, as in our main experiments. 
Except for the image backbone, the entire training and evaluation setup is identical to the setup described earlier.

For data, we use the publicly released precomputed text embeddings from \href{https://huggingface.co/datasets/M-CLIP/ImageCaptions-7M-Embeddings}{\texttt{ImageCaptions-7M-Embeddings}}, 
which contains 7M caption embeddings compatible with the ViT-B/16+ (by contrast, the corresponding ViT-B/16 release provides about 2M embeddings). 
All ViT-B/16+ runs use the full 7M set; under the low-resource condition, we uniformly subsample 1\% of these (\(\sim\)70K samples).

\paragraph{CIFAR-100 Zero-Shot Classification.}
Replacing the image backbone with ViT-B/16+ preserves the main trend: topology-aware objectives improve multilingual zero-shot accuracy over MCLIP in both regimes (Table~\ref{tab:cifar100_multilingual_avg_vitplus} and~\ref{tab:cifar100_vitplus}). 
On the \textbf{Full} setting, ToMCLIP($L_{\text{ta}}$) attains the best averages (Top-1/5/10 = 66.18/86.35/90.89) improving over MCLIP (64.54/85.30/89.99) by \textbf{+1.64/+1.05/+0.90} points, respectively. 
On the \textbf{Low} setting, the combined ToMCLIP model yields the highest averages (53.31/74.88/82.01) surpassing MCLIP (50.24/73.50/81.17) by \textbf{+3.07/+1.38/+0.84}. Notably, $L_{\text{ta}}$ alone also improves alignment quality under Low (51.42/74.47/81.97). 
These results are consistent with the main paper: enforcing topological consistency via $L_{\text{ta}}$ strengthens cross-lingual alignment in the shared embedding space.

\begin{table}[!ht]
\centering
\caption{Average Top-$k$ accuracy (\%) of the zero-shot classification on CIFAR-100 across 13 languages.}
\label{tab:cifar100_multilingual_avg_vitplus}
\small
\resizebox{0.7\textwidth}{!}{%
\begin{tabular}{lcccccc}
\toprule
{} & \multicolumn{3}{c}{\textbf{Low resource}} & \multicolumn{3}{c}{\textbf{Full data}} \\
\cmidrule(lr){2-4} \cmidrule(lr){5-7}
 & Top-1 & Top-5 & Top-10 & Top-1 & Top-5 & Top-10 \\
\midrule
CLIP & 24.39 & 35.91 & 42.47 & 24.39 & 35.91 & 42.47 \\
MCLIP & 50.24 & 73.50 & 81.17 & 64.54 & 85.30 & 89.99 \\
ToMCLIP($L_{\text{dm}}$) & 52.33 & 74.68 & 81.84 & 65.92 & 85.88 & 90.44 \\
ToMCLIP($L_{\text{ta}}$) & 51.42 & 74.47 & 81.97 & \textbf{66.18} & \textbf{86.35} & \textbf{90.89} \\
ToMCLIP & \textbf{53.31} & \textbf{74.88} & \textbf{82.01} & 65.53 & 85.82 & 90.33 \\
\bottomrule
\end{tabular}}
\end{table}
\begin{table*}[!ht]
\centering
\caption{Top-$k$ accuracy (\%) of zero-shot classification on CIFAR-100 across 13 languages (Full vs. Low). ViT-B/16+ is used for CLIP image encoder.}
\label{tab:cifar100_vitplus}
\resizebox{\textwidth}{!}{%
\begin{tabular}{llcccccccccccccc}
\toprule
\multirow{3}{*}{Setting} & \multirow{3}{*}{Model} & \multicolumn{13}{c}{Languages (13)} & \multirow{3}{*}{\textit{Avg}} \\
\cmidrule(lr){3-15}
& & En & Fr & Es & De & It & Ru & Pl & Tr & Da & Ja & Zh & Ko & Vi \\
\midrule
\multicolumn{16}{c}{Top-1 accuracy (\%) } \\
\midrule
& CLIP & 72.81 & 52.12 & 45.49 & 46.15 & 40.49 & 4.43 & 11.57 & 12.37 & 19.96 & 3.68 & 2.41 & 1.19 & 4.46 & 24.39 \\
& MCLIP & 72.42 & 66.85 & 69.25 & 56.04 & 69.85 & 67.57 & 64.14 & 65.87 & 69.86 & 38.09 & 69.11 & 66.28 & 63.75 & 64.54 \\
\textbf{Full data} & ToMCLIP($L_{\text{dm}}$) & \textbf{73.24} & \textbf{67.76} & 68.90 & 64.60 & 69.30 & \textbf{68.16} & \textbf{66.39} & \textbf{69.54} & 70.28 & 38.00 & \textbf{69.48} & 66.98 & 64.34 & 65.92 \\
\textbf{(2M smaples)} & ToMCLIP($L_{\text{ta}}$) & 72.21 & 67.31 & \textbf{69.61} & \textbf{69.30} & 68.50 & 67.30 & 64.74 & 67.93 & \textbf{70.39} & \textbf{41.10} & 68.75 & \textbf{67.51} & \textbf{65.69} & \textbf{66.18} \\
& ToMCLIP & 72.92 & \textbf{67.76} & 69.31 & 67.04 & \textbf{70.76} & 67.25 & 64.05 & 68.80 & 69.72 & 37.91 & 68.01 & 63.64 & 64.77 & 65.53 \\
\midrule
& CLIP & \textbf{72.81} & 52.12 & 45.49 & 46.15 & 40.49 & 4.43 & 11.57 & 12.37 & 19.96 & 3.68 & 2.41 & 1.19 & 4.46 & 24.39 \\
& MCLIP & 58.56 & 52.47 & 52.93 & 53.94 & 47.82 & 53.18 & 48.25 & 45.04 & 50.76 & 39.18 & 53.17 & 49.73 & 48.13 & 50.24 \\
\textbf{Low resource} & ToMCLIP($L_{\text{dm}}$) & 62.66 & 54.16 & 54.11 & 54.45 & 49.45 & 55.88 & 49.43 & 47.59 & 52.62 & 41.04 & 56.23 & 52.41 & \textbf{50.32} & 52.33 \\
\textbf{(1\% subset)} & ToMCLIP($L_{\text{ta}}$) & 62.46 & 54.78 & 53.41 & 54.75 & 49.09 & 50.94 & 49.22 & 45.68 & 52.68 & 39.26 & 56.01 & 51.16 & 48.97 & 51.42 \\
& ToMCLIP & 63.58 & \textbf{55.88} & \textbf{54.59} & \textbf{57.61} & \textbf{49.96} & \textbf{56.66} & \textbf{50.31} & \textbf{49.26} & \textbf{54.08} & \textbf{41.40} & \textbf{56.41} & \textbf{53.22} & 50.13 & \textbf{53.31} \\
\midrule
\multicolumn{16}{c}{Top-5 accuracy (\%) } \\
\midrule
& CLIP & 92.84 & 72.59 & 62.72 & 64.43 & 56.85 & 11.12 & 19.36 & 21.52 & 28.42 & 10.89 & 7.99 & 7.19 & 10.85 & 35.91 \\
& MCLIP & 92.81 & 87.94 & 90.49 & 82.12 & \textbf{89.68} & 88.43 & 83.43 & 88.71 & 87.86 & 50.84 & \textbf{90.96} & 88.57 & 87.04 & 85.30 \\
\textbf{Full data} & ToMCLIP($L_{\text{dm}}$) & 93.20 & \textbf{88.48} & \textbf{90.50} & 84.67 & 89.26 & 87.72 & \textbf{85.38} & 89.81 & \textbf{88.01} & 52.14 & 90.24 & 88.77 & \textbf{88.26} & 85.88 \\
\textbf{(2M smaples)} & ToMCLIP($L_{\text{ta}}$) & 93.04 & 88.01 & 89.32 & \textbf{89.76} & 89.30 & 87.78 & 83.99 & 89.84 & 87.99 & \textbf{55.97} & 90.37 & \textbf{89.02} & 88.21 & \textbf{86.35} \\
& ToMCLIP & \textbf{93.65} & 88.44 & 90.45 & 87.66 & 89.64 & \textbf{88.70} & 83.47 & \textbf{89.95} & 87.96 & 49.97 & \textbf{90.96} & 86.73 & 88.12 & 85.82 \\
\midrule
& CLIP & \textbf{92.84} & 72.59 & 62.72 & 64.43 & 56.85 & 11.12 & 19.36 & 21.52 & 28.42 & 10.89 & 7.99 & 7.19 & 10.85 & 35.91 \\
& MCLIP & 83.44 & 73.41 & 73.25 & 78.28 & 66.15 & 77.10 & 70.11 & 67.66 & 72.80 & 60.21 & 81.99 & 75.26 & 75.79 & 73.50 \\
\textbf{Low resource} & ToMCLIP($L_{\text{dm}}$) & 85.33 & 74.40 & 73.40 & 77.95 & 66.20 & \textbf{78.64} & \textbf{71.62} & 72.08 & 74.47 & 59.85 & 82.26 & \textbf{77.26} & 77.38 & 74.68 \\
\textbf{(1\% subset)} & ToMCLIP($L_{\text{ta}}$) & 84.97 & \textbf{74.63} & 74.56 & 79.33 & 66.21 & 76.99 & 71.23 & 69.11 & \textbf{75.00} & 60.28 & 82.80 & 75.45 & \textbf{77.52} & 74.47 \\
& ToMCLIP & 85.11 & 73.72 & \textbf{74.71} & \textbf{80.35} & \textbf{66.46} & 77.66 & 70.76 & \textbf{72.17} & 74.98 & \textbf{60.57} & \textbf{83.40} & 76.62 & 76.97 & \textbf{74.88} \\
\midrule
\multicolumn{16}{c}{Top-10 accuracy (\%) } \\
\midrule
& CLIP & 96.32 & 79.39 & 71.42 & 72.38 & 64.10 & 18.04 & 25.92 & 27.40 & 34.99 & 18.04 & 13.90 & 13.53 & 16.71 & 42.47 \\
& MCLIP & 96.41 & 92.03 & 94.25 & 89.52 & 93.35 & 92.51 & 88.74 & 93.26 & 92.10 & 56.07 & \textbf{95.35} & \textbf{94.28} & 91.96 & 89.99 \\
\textbf{Full data}  & ToMCLIP($L_{\text{dm}}$) & 96.65 & 92.69 & \textbf{94.68} & 89.92 & 93.35 & 92.72 & \textbf{90.17} & 93.96 & 91.31 & 58.17 & 94.65 & 94.26 & 93.15 & 90.44 \\
\textbf{(2M smaples)} & ToMCLIP($L_{\text{ta}}$) & 96.53 & 92.01 & 93.76 & \textbf{93.61} & \textbf{93.58} & 92.36 & 88.83 & \textbf{94.44} & \textbf{92.23} & \textbf{62.28} & 94.90 & 94.17 & 92.89 & \textbf{90.89} \\
& ToMCLIP & \textbf{96.72} & \textbf{92.90} & 94.14 & 92.45 & 93.40 & \textbf{93.70} & 88.01 & 94.14 & 91.54 & 55.21 & 95.15 & 93.43 & \textbf{93.47} & 90.33 \\

\midrule
& CLIP & \textbf{96.32} & 79.39 & 71.42 & 72.38 & 64.10 & 18.04 & 25.92 & 27.40 & 34.99 & 18.04 & 13.90 & 13.53 & 16.71 & 42.47 \\
& MCLIP & 91.16 & 79.26 & 81.08 & 85.98 & 72.05 & 84.87 & 79.89 & 77.03 & 80.56 & 67.11 & 89.01 & 82.35 & 84.89 & 81.17 \\
\textbf{Low resource} & ToMCLIP($L_{\text{dm}}$) & 91.16 & 80.00 & 80.47 & 86.07 & 72.30 & \textbf{86.25} & \textbf{80.06} & \textbf{81.06} & 82.00 & 66.52 & 88.75 & \textbf{83.52} & \textbf{85.75} & 81.84 \\
\textbf{(1\% subset)} & ToMCLIP($L_{\text{ta}}$) & 91.65 & \textbf{80.80} & \textbf{83.27} & 86.57 & \textbf{73.19} & 85.57 & 78.75 & 77.75 & 81.99 & \textbf{68.89} & 89.70 & 82.31 & 85.13 & 81.97 \\
& ToMCLIP & 91.54 & 79.59 & 81.88 & \textbf{87.54} & 72.80 & 85.15 & 79.71 & 80.38 & \textbf{82.13} & 67.88 & \textbf{89.79} & 83.41 & 84.37 & \textbf{82.01} \\
\bottomrule
\end{tabular}
}
\end{table*}

\paragraph{Multilingual Image–Text Retrieval on xFlickr\&CO.}
With the ViT-B/16+ image encoder, topology-aware objectives improve multilingual retrieval over MCLIP in most settings (Table~\ref{tab:xflickrco_vitplus}). 
On \textbf{Full}, ToMCLIP($L_{\text{ta}}$) attains the best averages for both directions 
(\textbf{IR}: R@1/5/10 = \textbf{62.98}/\textbf{85.79}/\textbf{91.60} vs. MCLIP: 62.24/85.27/91.09 and
\textbf{TR}: \textbf{63.79}/\textbf{86.21}/\textbf{91.98} vs. 62.82/85.47/91.32). 
On \textbf{Low}, the combined ToMCLIP variant yields the top averages for IR (R@1/5/10 = \textbf{58.53}/\textbf{83.37}/\textbf{90.51}), while ToMCLIP($L_{\text{dm}}$) is strongest for TR (R@1/5/10 = \textbf{57.99}/\textbf{83.84}/\textbf{90.63}). 
These trends mirror our zero-shot CIFAR-100 results: enforcing topological consistency via $L_{\text{ta}}$ improves cross-lingual alignment.

\begin{table}[!ht]
\centering
\caption{Multilingual retrieval on xFlickr\&CO. Average R@k (\%) across 8 languages (Low vs. Full).
\textcolor{red}{$\blacktriangle$} indicates an improvement over MCLIP (same setting and direction),
\textcolor{blue}{$\blacktriangledown$} indicates a decrease.}
\label{tab:xflickrco_vitplus}
\resizebox{0.9\textwidth}{!}{%
\begin{tabular}{llcccccc}
\toprule
\multirow{2}{*}{Direction} & \multirow{2}{*}{Model} & \multicolumn{3}{c}{\textbf{Low resource (1\% subset)}} & \multicolumn{3}{c}{\textbf{Full data (2M samples)}} \\
\cmidrule(lr){3-5} \cmidrule(lr){6-8}
& & R@1 & R@5 & R@10 & R@1 & R@5 & R@10 \\
\midrule
IR & CLIP & 16.38 & 27.00 & 32.06 & 16.38 & 27.00 & 32.06 \\
 & MCLIP & 56.44 & 82.28 & 89.60 & 62.24 & 85.27 & 91.09 \\
 & \textsc{ToMCLIP}(\(L_{dm}\)) & 57.91\,\deltaup{1.47} & 83.15\,\deltaup{0.87} & 90.37\,\deltaup{0.77} & 62.24\,\deltaup{0.00} & 85.39\,\deltaup{0.12} & 91.22\,\deltaup{0.13} \\
 & \textsc{ToMCLIP}(\(L_{ta}\)) & 57.58\,\deltaup{1.14} & 82.77\,\deltaup{0.49} & 90.12\,\deltaup{0.53} & \textbf{62.98}\,\deltaup{0.74} & \textbf{85.79}\,\deltaup{0.52} & \textbf{91.60}\,\deltaup{0.51}\\
 & \textsc{ToMCLIP} & \textbf{58.53}\,\deltaup{2.08} & \textbf{83.37}\,\deltaup{1.09} & \textbf{90.51}\,\deltaup{0.91} & 61.91\,\deltadown{0.33} & 84.89\,\deltadown{0.38} & 90.78\,\deltadown{0.31}\\
\midrule
TR & CLIP & 18.91 & 31.46 & 36.59 & 18.91 & 31.46 & 36.59 \\
 & MCLIP & 56.73 & 83.33 & 90.34 & 62.82 & 85.47 & 91.32 \\
 & \textsc{ToMCLIP}(\(L_{dm}\)) & \textbf{57.99}\,\deltaup{1.26} & \textbf{83.84}\,\deltaup{0.51} & \textbf{90.63}\,\deltaup{0.29} & 62.95\,\deltaup{0.13} & 85.67\,\deltaup{0.20} & 91.14\,\deltadown{0.17} \\
 & \textsc{ToMCLIP}(\(L_{ta}\)) & 57.33\,\deltaup{0.60} & 83.26\,\deltadown{0.06} & 90.27\,\deltadown{0.07} & \textbf{63.79}\,\deltaup{0.97} & \textbf{86.21}\,\deltaup{0.74} & \textbf{91.98}\,\deltaup{0.66} \\
 & \textsc{ToMCLIP} & 57.57\,\deltaup{0.84} & 83.39\,\deltaup{0.06} & 90.61\,\deltaup{0.28} & 62.19\,\deltadown{0.63} & 85.09\,\deltadown{0.38} & 90.84\,\deltadown{0.47} \\
\bottomrule
\end{tabular}}
\end{table}

\begin{table}[t]
\centering
\caption{Multilingual retrieval on xFlickr\&CO. R@1 retrieval accuracy (\%) across languages.
\textcolor{red}{$\blacktriangle$} and \textcolor{blue}{$\blacktriangledown$} mark improvements/decreases over MCLIP
for the same setting and direction; here only the icons are shown.}
\resizebox{0.9\textwidth}{!}{%
\begin{tabular}{lllcccccccc c}
\toprule
\multirow{3}{*}{Setting} & \multirow{3}{*}{Direction} &  \multirow{3}{*}{Model} & \multicolumn{8}{c}{Languages} & \multirow{2}{*}{\textit{Avg}} \\
\cmidrule(lr){4-11}
 & &  & En & Es & De & Id & Ru & Tr & Ja & Zh &  \\
\midrule
 & \multirow{5}{*}{\textbf{IR}} & CLIP & 64.70 & 34.70 & 21.35 & 5.65 & 0.90 & 2.70 & 0.40 & 0.65 & 16.38 \\
 &  & MCLIP & 65.50 & \textbf{69.05} & 59.60 & 61.40 & \textbf{72.45} & \textbf{66.90} & 41.40 & 61.60 & 62.24 \\
 &  & \textsc{ToMCLIP}(\(L_{dm}\)) & 65.60\,\uptriangle & 68.70\,\down & 59.65\,\uptriangle & 61.70\,\uptriangle & 72.30\,\down & 65.70\,\down & 42.35\,\uptriangle & 61.90\,\uptriangle & 62.24\,\uptriangle \\
 &  & \textsc{ToMCLIP}(\(L_{ta}\)) & 65.20\,\down & 69.00\,\down & \textbf{60.00}\,\uptriangle & \textbf{63.05}\,\uptriangle & 72.35\,\down & 65.75\,\down & \textbf{46.25}\,\uptriangle & 62.25\,\uptriangle & \textbf{62.98}\,\uptriangle \\
 \textbf{Full data} &  & \textsc{ToMCLIP} & \textbf{65.75}\,\uptriangle & 68.60\,\down & 59.20\,\down & 61.30\,\down & 72.20\,\down & 66.00\,\down & 39.90\,\down & \textbf{62.35}\,\uptriangle & 61.91\,\down \\
\cmidrule(lr){2-12}
 \textbf{(2M samples)} & \multirow{5}{*}{\textbf{TR}} & CLIP & 66.70 & 40.45 & 26.05 & 10.05 & 1.15 & 5.10 & 0.85 & 0.90 & 18.91 \\
 &  & MCLIP & 68.30 & 68.90 & 59.20 & 62.00 & 73.55 & 66.75 & 42.50 & 61.35 & 62.82 \\
 &  & \textsc{ToMCLIP}(\(L_{dm}\)) & 68.30\,\uptriangle & \textbf{70.00}\,\uptriangle & 57.70\,\down & 62.20\,\uptriangle & \textbf{73.75}\,\uptriangle & 66.85\,\uptriangle & 43.50\,\uptriangle & 61.30\,\down & 62.95\,\uptriangle \\
 &  & \textsc{ToMCLIP}(\(L_{ta}\)) & 68.80\,\uptriangle & 69.85\,\uptriangle & \textbf{59.75}\,\uptriangle & \textbf{62.40}\,\uptriangle & \textbf{73.75}\,\uptriangle & 66.60\,\down & \textbf{46.70}\,\uptriangle & \textbf{62.45}\,\uptriangle & \textbf{63.79}\,\uptriangle \\
 &  & \textsc{ToMCLIP} & \textbf{68.85}\,\uptriangle & 69.30\,\uptriangle & 57.85\,\down & 61.30\,\down & 72.65\,\down & \textbf{67.30}\,\uptriangle & 39.55\,\down & 60.75\,\down & 62.19\,\down \\
 \midrule
 & \multirow{5}{*}{\textbf{IR}} & CLIP & \textbf{64.70} & 34.70 & 21.35 & 5.65 & 0.90 & 2.70 & 0.40 & 0.65 & 16.38 \\
 &  & MCLIP & 59.05 & 60.30 & 52.45 & 55.85 & 63.45 & 55.15 & 49.30 & 56.00 & 56.44 \\
 &  & \textsc{ToMCLIP}(\(L_{dm}\)) & 59.50\,\uptriangle & \textbf{63.05}\,\uptriangle & \textbf{55.30}\,\uptriangle & \textbf{57.05}\,\uptriangle & 64.80\,\uptriangle & 55.80\,\uptriangle & \textbf{49.75}\,\uptriangle & 58.05\,\uptriangle & 57.91\,\uptriangle \\
 &  & \textsc{ToMCLIP}(\(L_{ta}\)) & 59.50\,\uptriangle & 61.95\,\uptriangle & 53.75\,\uptriangle & 56.80\,\uptriangle & 65.80\,\uptriangle & 55.80\,\uptriangle & 49.65\,\uptriangle & 57.40\,\uptriangle & 57.58\,\uptriangle \\
 \textbf{Low resource} &  & \textsc{ToMCLIP} & 60.55\,\uptriangle & 62.80\,\uptriangle & 55.25\,\uptriangle & 57.00\,\uptriangle & \textbf{66.60}\,\uptriangle & \textbf{57.40}\,\uptriangle & 49.55\,\uptriangle & \textbf{59.05}\,\uptriangle & \textbf{58.53}\,\uptriangle \\
\cmidrule(lr){2-12}
\textbf{(1\% subset)} & \multirow{5}{*}{\textbf{TR}} & CLIP & \textbf{66.70} & 40.45 & 26.05 & 10.05 & 1.15 & 5.10 & 0.85 & 0.90 & 18.91 \\
 &  & MCLIP & 60.35 & 61.05 & 51.85 & 56.40 & 63.55 & 54.70 & 49.05 & 56.90 & 56.73 \\
 &  & \textsc{ToMCLIP}(\(L_{dm}\)) & 61.45\,\uptriangle & 61.70\,\uptriangle & \textbf{53.45}\,\uptriangle & \textbf{57.20}\,\uptriangle & \textbf{65.45}\,\uptriangle & \textbf{55.85}\,\uptriangle & \textbf{51.30}\,\uptriangle & \textbf{57.55}\,\uptriangle & \textbf{57.99}\,\uptriangle \\
 &  & \textsc{ToMCLIP}(\(L_{ta}\)) & 61.45\,\uptriangle & 61.35\,\uptriangle & 53.10\,\uptriangle & 56.25\,\down & 64.40\,\uptriangle & 55.70\,\uptriangle & 50.00\,\uptriangle & 56.40\,\down & 57.33\,\uptriangle \\
 &  & \textsc{ToMCLIP} & 61.15\,\uptriangle & \textbf{61.80}\,\uptriangle & 52.80\,\uptriangle & 57.10\,\uptriangle & 65.30\,\uptriangle & 55.45\,\uptriangle & 49.70\,\uptriangle & 57.30\,\uptriangle & 57.57\,\uptriangle \\
\bottomrule
\end{tabular}}
\label{tab:xflickrco_per_language}
\end{table}

\section{ADDITIONAL RESULTS ON IMAGENET-1K}
We have conducted additional experiments on ImageNet-1K zero-shot classification. 
The results consistently show performance improvements over MCLIP in both full-data (2M samples) and low-resource (1\% subset) settings, further supporting the effectiveness and generality of the proposed topological alignment approach (Table~\ref{tab:imagenet_multilingual_avg} \~~\ref{tab:imagenet_lang_top10}).
We further note that the relative improvements are larger in the low-resource setting. 
For example, ToMCLIP (ViT-B/16+) achieves a +2.51\% Top-1 gain over MCLIP in English under the 1\% data condition, compared to +1.79\% in the full-data setting. This trend is consistent across multiple languages. 
This pattern suggests that the topological alignment loss acts as a structural regularizer, encouraging global embedding consistency across languages and providing greater benefit when training data is scarce.

\begin{table}[!ht]
\centering
\caption{Average Top-$k$ accuracy (\%) of the zero-shot classification on ImageNet-1K across 13 languages (ViT-B/32).}
\label{tab:imagenet_multilingual_avg}
\small
\resizebox{0.7\textwidth}{!}{%
\begin{tabular}{lcccccc}
\toprule
{} & \multicolumn{3}{c}{\textbf{Low resource}} & \multicolumn{3}{c}{\textbf{Full data}} \\
\cmidrule(lr){2-4} \cmidrule(lr){5-7}
 & Top-1 & Top-5 & Top-10 & Top-1 & Top-5 & Top-10 \\
\midrule
MCLIP & 11.98 & 27.33 & 35.05 & 34.01 & 56.56 & 64.16 \\
ToMCLIP ($L_{dm}$) & 12.34 & 27.77 & 35.31 & 34.40 & 56.99 & 64.44 \\
ToMCLIP ($L_{ta}$) & 12.51 & 27.98 & 35.55 & 34.07 & 56.69 & 64.22 \\
ToMCLIP & 12.18 & 27.57 & 35.22 & 34.37 & 56.87 & 64.45 \\
\bottomrule
\end{tabular}}
\end{table}
\begin{table}[!ht]
\centering
\caption{Average Top-$k$ accuracy (\%) of the zero-shot classification on ImageNet-1K across 13 languages (ViT-B/32+).}
\label{tab:imagenet_multilingual_avg_plus}
\small
\resizebox{0.7\textwidth}{!}{%
\begin{tabular}{lcccccc}
\toprule
{} & \multicolumn{3}{c}{\textbf{Low resource}} & \multicolumn{3}{c}{\textbf{Full data}} \\
\cmidrule(lr){2-4} \cmidrule(lr){5-7}
 & Top-1 & Top-5 & Top-10 & Top-1 & Top-5 & Top-10 \\
\midrule
MCLIP & 19.65 & 38.25 & 46.11 & 39.63 & 61.39 & 67.86 \\
ToMCLIP ($L_{dm}$) & 20.70 & 39.46 & 47.09 & 40.55 & 62.05 & 68.54 \\
ToMCLIP ($L_{ta}$) & 20.18 & 39.78 & 46.67 & 39.83 & 61.48 & 68.14 \\
ToMCLIP & 21.44 & 40.15 & 47.89 & 40.43 & 61.85 & 68.34 \\
\bottomrule
\end{tabular}}
\end{table}
\begin{table*}[!ht]
\centering
\caption{Top-1 accuracy (\%) of zero-shot classification on ImageNet-1K across 13 languages (ViT-B/32+).}
\label{tab:imagenet_lang_top1}
\resizebox{\textwidth}{!}{%
\begin{tabular}{llcccccccccccccc}
\toprule
& & \multicolumn{13}{c}{Languages (13)} & \multirow{2}{*}{\textit{Avg}} \\
\cmidrule(lr){3-15}
Setting & Model & En & Fr & Es & De & It & Ru & Pl & Tr & Da & Ja & Zh & Ko & Vi \\
\midrule
& MCLIP & 48.36 & 41.74 & 44.74 & 42.30 & 39.03 & 38.78 & 38.30 & 42.95 & 43.75 & 20.20 & 36.29 & 42.60 & 36.19 & 39.63 \\
\textbf{Full data} & ToMCLIP ($L_{dm}$) & 47.80 & 42.01 & 45.02 & 42.81 & 41.04 & 39.83 & 39.08 & 43.41 & 44.57 & 23.09 & 37.40 & 43.51 & 37.55 & 40.55 \\
\textbf{(2M smaples)} & ToMCLIP ($L_{ta}$) & 46.70 & 41.14 & 43.79 & 41.87 & 39.73 & 38.97 & 38.54 & 41.97 & 42.71 & 26.68 & 37.71 & 42.17 & 35.80 & 39.83 \\
& ToMCLIP & 48.60 & 42.55 & 44.74 & 42.77 & 40.61 & 40.14 & 38.69 & 43.71 & 44.83 & 21.93 & 36.33 & 43.44 & 37.19 & 40.43 \\
\midrule
& MCLIP & 24.45 & 20.49 & 20.15 & 20.89 & 18.39 & 19.91 & 19.41 & 17.30 & 21.15 & 17.10 & 21.82 & 16.24 & 18.08 & 19.65 \\
\textbf{Low resource} & ToMCLIP ($L_{dm}$) & 25.67 & 21.67 & 21.71 & 22.32 & 20.12 & 22.12 & 20.00 & 18.77 & 22.17 & 17.78 & 22.12 & 15.75 & 18.90 & 20.70 \\
\textbf{(1\% subset)} & ToMCLIP ($L_{ta}$) & 24.77 & 20.72 & 20.68 & 20.73 & 19.78 & 21.19 & 19.24 & 18.61 & 20.94 & 18.08 & 22.63 & 15.92 & 19.01 & 20.18 \\
& ToMCLIP & 26.96 & 22.22 & 22.97 & 23.25 & 20.71 & 22.44 & 20.40 & 19.48 & 22.58 & 18.69 & 23.59 & 15.44 & 20.00 & 21.44 \\
\bottomrule
\end{tabular}
}
\end{table*}
\begin{table*}[!ht]
\centering
\caption{Top-10 accuracy (\%) of zero-shot classification on ImageNet-1K across 13 languages (ViT-B/32+).}
\label{tab:imagenet_lang_top10}
\resizebox{\textwidth}{!}{%
\begin{tabular}{llcccccccccccccc}
\toprule
& & \multicolumn{13}{c}{Languages (13)} & \multirow{2}{*}{\textit{Avg}} \\
\cmidrule(lr){3-15}
Setting & Model & En & Fr & Es & De & It & Ru & Pl & Tr & Da & Ja & Zh & Ko & Vi \\
\midrule
& MCLIP & 78.09 & 70.31 & 72.65 & 72.05 & 65.82 & 64.92 & 64.70 & 72.86 & 72.57 & 43.25 & 67.58 & 70.54 & 66.80 & 67.86 \\
\textbf{Full data} & ToMCLIP ($L_{dm}$) & 77.94 & 69.86 & 72.85 & 71.64 & 66.79 & 66.30 & 64.78 & 73.12 & 73.27 & 47.11 & 68.49 & 70.91 & 68.01 & 68.54 \\
\textbf{(2M smaples)} & ToMCLIP ($L_{ta}$) & 76.51 & 68.92 & 71.94 & 70.78 & 65.58 & 65.62 & 64.71 & 70.14 & 70.64 & 53.26 & 69.79 & 70.33 & 67.56 & 68.14 \\
& ToMCLIP & 78.54 & 70.41 & 73.05 & 72.07 & 66.07 & 66.41 & 64.07 & 73.19 & 72.62 & 45.81 & 68.06 & 71.11 & 66.97 & 68.34 \\
\midrule
& MCLIP & 54.32 & 46.54 & 46.25 & 47.88 & 42.40 & 45.48 & 43.66 & 42.35 & 49.09 & 40.34 & 53.53 & 39.66 & 47.86 & 46.11 \\
\textbf{Low resource} & ToMCLIP ($L_{dm}$) & 55.89 & 47.80 & 48.63 & 48.97 & 43.81 & 47.28 & 43.07 & 43.31 & 48.96 & 41.38 & 54.30 & 39.56 & 49.26 & 47.09 \\
\textbf{(1\% subset)} & ToMCLIP ($L_{ta}$) & 55.16 & 47.32 & 47.78 & 45.83 & 44.40 & 46.11 & 42.45 & 43.89 & 47.69 & 42.02 & 56.07 & 38.14 & 49.84 & 46.67 \\
& ToMCLIP & 57.57 & 48.79 & 48.88 & 49.67 & 44.83 & 47.61 & 44.25 & 44.81 & 48.91 & 43.05 & 56.06 & 36.64 & 51.43 & 47.89 \\
\bottomrule
\end{tabular}
}
\end{table*}

\end{document}